\documentclass{article}
\usepackage[affil-it]{authblk}
\usepackage[utf8]{inputenc}
\usepackage[a-1b]{pdfx}
\usepackage{amsthm}
\usepackage{hyperref}
\usepackage{booktabs}
\usepackage{adjustbox}
\newtheorem{theorem}{Theorem}

\usepackage[margin=1in]{geometry}

\usepackage{graphics,multicol}
\usepackage{natbib}
\usepackage{epsfig}
\usepackage{wrapfig}
\usepackage{tikz}
\usetikzlibrary{positioning,arrows.meta}
\usepackage{float}
\usepackage{bbm}
\usepackage{subcaption}
\usepackage{amsmath,amsfonts}
\usepackage{amssymb}
\usepackage{rotating}
\usepackage{pdfpages}
\usepackage{color}
\usepackage{epic}
\usepackage{eepic}
\usepackage[english]{babel}
\usepackage{enumerate}
\usepackage{url}
\usepackage{xcolor}
\usepackage{algorithm}
\usepackage{algpseudocode}
\usepackage{tabularx}
\usepackage{array}
\usepackage{booktabs}
\usepackage{pifont}

\newcommand{\cmark}{\ding{51}}

\newtheorem{corollary}[theorem]{Corollary}

\newtheorem{proposition}[theorem]{Proposition}

\theoremstyle{definition}
\newtheorem{definition}[theorem]{Definition}

\title {Structured Learning on Mapper Representations}

\author[1]{George Babus}
\author[1]{Farzana Nasrin}

\affil[1]{Department of Mathematics,
       University of Tennessee,
       Knoxville, TN -37916, USA}

\date{}

\begin{document}

\maketitle
\begin{abstract}
Modern machine learning (ML) methods are highly effective for prediction tasks, but many commonly used representations reduce complex data to fixed dimensional embeddings that may suppress multiscale structural organization. The Mapper algorithm from topological data analysis (TDA) provides a different perspective by decomposing data into overlapping local regions connected through a nerve construction, producing a structured representation that captures geometric organization, local statistical behavior, and relational connectivity simultaneously.
In this work, we develop a framework for learning over Mapper induced structured representations. Rather than treating Mapper as a preprocessing step that produces a graph for downstream learning, we treat the full Mapper construction as part of the representation itself. 
We study mathematical properties of these representations, including invariance under relabeling, a distance functional on the space of Mapper representations, structural complexity of multiscale decompositions, and learning oriented stability under representation perturbations. 
Experiments on time series and graph classification datasets validate the proposed framework through controlled studies of representation ablation, Mapper parameter sensitivity, and the geometry of the induced representation space.
Together, these results demonstrate how the proposed mathematical framework enables systematic comparison, interpretation, and analysis of Mapper representations, providing practical tools for studying representation geometry, structural complexity, and learning stability in learning tasks.
\end{abstract}

\textbf{keywords} Topological data analysis, Mapper, graph neural networks, statistical learning theory, structured prediction

\maketitle


\section{Introduction}

\subsection{Motivation}

    Machine learning (ML) relies fundamentally on representations. Before prediction is performed, complex datasets are transformed into objects that encode information about the underlying data and serve as the input to learning algorithms \cite{bengio2013representation, goodfellow2016deep, hamilton2017representation, lecun2015deep}. Depending on the application, these representations may take the form of Euclidean embeddings, graphs, sequences, or other structured objects. The choice of representation determines what information is retained, how different datasets are compared, and ultimately what can be learned from the data. Consequently, the mathematical study of representation spaces has become an increasingly important direction in modern ML, complementing advances in prediction algorithms themselves.
    
    Most widely used representations are constructed to facilitate optimization in vector spaces. Topological Data Analysis (TDA) provides a complementary paradigm for representing complex data through geometric and topological structure \cite{Carlsson2009,edelsbrunner2010computational}. Among its most widely used tools, the Mapper algorithm constructs an interpretable multiscale decomposition of data through filtering, overlapping covers, clustering, and nerve construction \cite{singh2007topological}. Mapper has been successfully applied to exploratory analysis and visualization in biology, medicine, genomics, and other scientific domains \cite{nicolau2011topology,lum2013extracting, Amezquita2023}, while its statistical properties have been investigated in studies of consistency, stability, and parameter selection \cite{carriere2018statistical}. 

    Despite its broad success, Mapper is almost always viewed as an intermediate computational tool rather than as a representation in its own right. In many learning pipelines, the output of Mapper is reduced to a graph valued embedding or further summarized into fixed dimensional feature vectors before downstream prediction \cite{cyranka2019mapper,deepgraphmapper,curry2023topologically,comparingmappergraphs}. Consequently, learning is typically performed on derived graph or vector representations rather than on the complete Mapper construction itself. However, the Mapper algorithm naturally produces a much richer object consisting of interacting components, including the lens induced cover, pullback decomposition, refined local clusters, overlap relationships, nerve connectivity, and associated local attributes. We regard these components collectively as defining the representation itself. This perspective enables individual components of the Mapper construction to be studied as constituents of the representation itself, allowing their geometric properties, structural complexity, and predictive contributions to be analyzed within a unified mathematical framework.
    

    In this work, we formalize the complete Mapper construction as the representation itself. Instead of identifying a dataset with its nerve graph, we represent it by the full multiscale decomposition generated during the Mapper pipeline. Learning is then formulated as the composition of a representation map with predictors acting on the resulting representation space. This viewpoint separates representation construction from prediction and allows mathematical properties of the representation to be studied independently of any particular learning architecture. 
    Although the present paper illustrates the approach using graph neural networks operating on attributed Mapper graphs, the underlying representation is the complete Mapper decomposition and is not tied to a specific learning architecture.

    Once Mapper is regarded as a representation space rather than merely a graph construction, several mathematical questions arise naturally. First, different representations should be compared independently of arbitrary relabeling of clusters, motivating notions of equivalence, distance, and kernels on representation space. Second, because Mapper representations possess multiscale overlap structure, representation complexity depends on structural characteristics beyond graph size alone. Finally, predictive learning should be robust to perturbations of the representation, motivating stability analyses that relate structural changes to changes in prediction.

Guided by this representation centric viewpoint, we develop a mathematical framework for studying Mapper representations from both geometric and statistical perspectives. The framework has two complementary components. First, we introduce a structural geometry on the space of Mapper representations through permutation invariant equivalence relations, distance functional, and kernels that enable quantitative comparison between Mapper objects. Second, we develop theoretical notions of representation complexity together with stability results for graph based predictors acting on Mapper derived representations. We then validate these mathematical developments through a comprehensive empirical study that investigates the predictive contribution of different representation components, the influence of Mapper construction parameters, and the structural organization of Mapper representations across time series and graph classification tasks.
    

    \subsection{Related Work}
    
    Topological Data Analysis (TDA) has emerged as a powerful framework for
    studying the shape of data \cite{Carlsson2009, Wasserman2018}.
    Persistent homology provides stable summaries of topological features
    across scales \cite{edelsbrunner2010computational, ChazalMichel2021},
    while the Mapper algorithm produces graph based representations that capture
    multiscale structure \cite{singh2007topological}. Mapper has been widely
    used for exploratory analysis and scientific discovery in complex datasets,
    including biological and medical applications \cite{nicolau2011topology,
    lum2013extracting}. Statistical properties of Mapper and parameter selection
    have been studied in several works, including consistency and stability
    analyses \cite{carriere2018statistical}.
    
    Graph representation learning has developed largely independently,
    focusing on embedding nodes or graphs into Euclidean spaces.
    Methods such as spectral embeddings \cite{belkin2003laplacian},
    random-walk-based embeddings \cite{perozzi2014deepwalk, grover2016node2vec},
    and graph neural networks \cite{kipf2017gcn, hamilton2017representation}
    aim to preserve structural information while enabling efficient learning.
    However, these approaches typically operate within vector spaces and do
    not explicitly incorporate higher level topological structure.
    
    Several works have combined topology with machine learning by converting
    topological summaries into feature vectors. For example, persistence
    diagrams are often vectorized through kernel methods or learned embeddings
    \cite{reininghaus2015stable, adams2017persistence, kusano2016persistence}.
    Related efforts have explored the use of Mapper representations within
    learning pipelines, including Mapper based classifiers
    \cite{cyranka2019mapper}, hierarchical representations using Mapper with
    graph neural networks \cite{deepgraphmapper}, and topologically attributed
    graphs for shape classification \cite{curry2023topologically}. 
    Other
    work studies methods for comparing Mapper graphs in machine learning
    settings, such as analyzing neural network activations
    \cite{comparingmappergraphs}. 
    Despite their practical success, many of these approaches ultimately
    reduce topological objects to Euclidean representations prior to
    learning, thereby flattening structural information encoded in the
    Mapper graph into fixed dimensional feature vectors.
    In contrast, the present work develops a framework in which the
    topological representation itself is the object of learning. This
    perspective is related to structured prediction and learning over
    combinatorial objects \cite{bakir2007predicting, nowozin2011structured},
    but it requires new tools to handle invariances and complexity specific
    to Mapper constructions. To our knowledge, a unified mathematical framework for the representation, comparison, complexity, and statistical analysis of Mapper derived representations has not been previously developed.

    \subsection{Contributions}

The main contributions of this work are summarized below.

\begin{itemize}

\item \textbf{Learning over Mapper representations.}
We develop a learning framework that treats the complete Mapper representation, rather than a vector embedding or graph alone, as the fundamental object of analysis.

\item \textbf{Structural geometry of Mapper representations.}
We introduce permutation-invariant equivalence relations, structural distance functionals, and kernels that enable principled comparison between Mapper representations.

\item \textbf{Complexity and stability analysis.}
We develop notions of structural complexity for Mapper representations and establish stability properties for graph-based predictors acting on these representations.

\item \textbf{Experimental characterization of Mapper representations.}
Through experiments on time-series and graph classification datasets, we investigate (i) the predictive contribution of different representation components, (ii) the influence of Mapper construction parameters on learning behavior, and (iii) the structural organization of Mapper representations under the proposed distance.
\end{itemize}
\begin{figure}[h!]
    \centering
    \includegraphics[trim=  0cm 8cm 1cm 2cm,
    clip,
    width=5.5in,height= 2.15in]{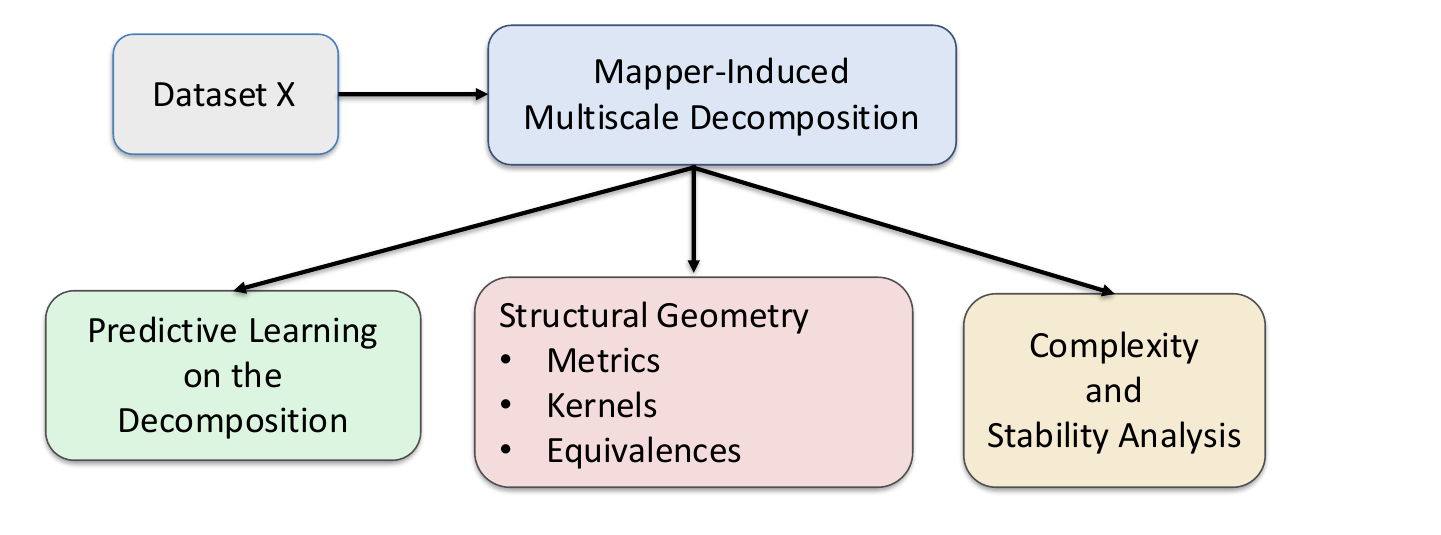}
    \caption{Overview of the proposed 
    framework. 
    A dataset is transformed into a structured multiscale
    decomposition through the Mapper construction. The resulting
    representation serves as the foundation for three interconnected
    components of the framework.
    }
    \label{fig:framework}
    \end{figure}

            The overall organization of the proposed framework is summarized in
    Figure~\ref{fig:framework}. 
     The remainder of the paper is organized as follows. Section~ \ref{sec:prelim} reviews
the Mapper construction and introduces the learning setting underlying
the proposed framework. Section~ \ref{sec:framework} develops the notion of
Mapper induced representations and formulates predictive learning
directly on these structured representation spaces. Section~ \ref{sec:geometry}
establishes the mathematical geometry of Mapper representations,
including equivalence relations, structural distances, kernels, and
multiscale refinement. Section~ \ref{sec:complexity_stability} develops theoretical notions of
representation complexity together with stability results for
graph based predictors acting on projected Mapper representations.
Finally, Section~\ref{sec:exp} presents experiments on time series and graph
classification datasets to investigate predictive contribution,
representation complexity, and robustness under structural
perturbations.


\section{Preliminaries} \label{sec:prelim}

This section introduces the basic notions underlying our framework.
We first describe the
general learning setup and then review the Mapper construction.

\subsection{Data domains and notation}

To establish notation used consistently throughout the paper, we briefly describe the general learning setting in which our framework is developed. The formulation is intentionally broad and is not specific to any particular learning algorithm or data modality.
Let $\mathcal{X}$ denote a domain of datasets. An element
$X \in \mathcal{X}$ is a finite collection of objects equipped with
sufficient structure to compute distances, similarities, or features.
This formulation is intentionally general and allows the framework
to apply to a variety of data types, including point clouds, images or signals, time series observations, and vertex sets of graphs with associated features. 
In a supervised learning setting, each dataset
$X \in \mathcal{X}$ is associated with an output
$y \in \mathcal{Y}$, where $\mathcal{Y}$ denotes a target
space such as class labels or real valued responses.
Given training samples
$
\{(X_i,y_i)\}_{i=1}^n \subset \mathcal{X}\times\mathcal{Y},
$
the goal is to learn a predictor
$
h:\mathcal{X}\rightarrow\mathcal{Y}
$
that generalizes to unseen datasets.


\subsection{The Mapper construction}

The Mapper algorithm is a method for summarizing the global
structure of data through a graph representation.
It proceeds by combining three ingredients:
a low dimensional representation of the data,
a cover of that representation, and clustering within
overlapping regions of the data.
The first step of the Mapper construction assigns
coordinates to each data point.

\begin{definition}[Lens function]
Given a dataset $X$, a \emph{lens} is a function
\[
f:X\rightarrow\mathbb{R}^d
\]
that maps each data point to a coordinate vector.
\end{definition}
The lens may arise from domain specific features,
dimensionality reduction methods (e.g., PCA),
or learned representations.
The next step partitions the image of the dataset
under the lens using overlapping regions.

\begin{definition}[Cover]
Let $Y\subset\mathbb{R}^d$.
A collection of sets
\[
\mathcal{U}=\{U_1,\dots,U_K\}
\]
is a \emph{cover} of $Y$ if $
Y\subseteq\bigcup_{k=1}^{K}U_k$.
\end{definition}

In practice, covers are often constructed from overlapping
intervals or rectangular regions in $\mathbb{R}^d$.
The cover defined in the lens space is then transferred
back to the original dataset. This step is referred as pullback of the cover.

\begin{definition}[Pullback cover]
Let $\mathcal{U}$ be a cover of $f(X)$.
The pullback cover of $X$ is
\[
f^{-1}(\mathcal{U})
=
\{f^{-1}(U_k):U_k\in\mathcal{U}\}.
\]
\end{definition}

Each pullback set contains the data points whose lens
values lie inside a particular region of the cover.
Since each pullback set may still contain heterogeneous
points, Mapper refines these sets using a clustering
procedure. 

\begin{definition}[Refined pullback cover]
Let $\mathcal{C}$ be a clustering algorithm.
The refined pullback cover is
\[
\overline{\mathcal{U}}
=
\bigcup_{k=1}^{K}\mathcal{C}(f^{-1}(U_k)),
\]
the collection of clusters obtained by clustering
each pullback set.
\end{definition}

Because the original cover regions overlap,
clusters obtained from different pullback sets
may also overlap. Finally, Mapper constructs a graph structure derived from the nerve construction.
The connectivity structure among these clusters
is encoded using the nerve construction.

\begin{definition}[Nerve]
Given a collection of sets $\mathcal{A}$,
the \emph{nerve} $\mathcal{N}(\mathcal{A})$
is the simplicial complex whose vertices
correspond to elements of $\mathcal{A}$ and
whose simplices correspond to nonempty
intersections among those sets.
\end{definition}


\begin{definition}[1-skeleton]
Let $K$ be a simplicial complex. The \emph{1-skeleton} of $K$,
denoted by $\mathrm{sk}_1(K)$, is the graph consisting of the
vertices and edges of $K$, with all higher-dimensional simplices
discarded.
\end{definition}

\noindent Combining the steps above yields the Mapper
representation of the dataset.

\begin{definition}[Mapper object]
Given a dataset $X$, lens $f$, cover $\mathcal{U}$
of $f(X)$, and clustering procedure $\mathcal{C}$,
the \emph{Mapper object} is the graph
\[
M=\mathrm{sk}_1\big(\mathcal{N}(\overline{\mathcal{U}})\big),
\]
the 1-skeleton of the nerve of the refined
pullback cover.
\end{definition}

Nodes of the Mapper graph correspond to clusters of data points,
while edges indicate overlapping clusters. Additional attributes,
such as cluster sizes or summary statistics, may also be attached
to nodes and edges. Taken together, the Mapper pipeline constructs a multiscale
representation. The resulting Mapper
graph therefore captures both local organization and global
connectivity patterns within the dataset. Figure~\ref{fig:mapper} summarizes the overall
Mapper construction used throughout this work.





\begin{figure}[t!]
	\centering
	\includegraphics[width=1.1\linewidth]{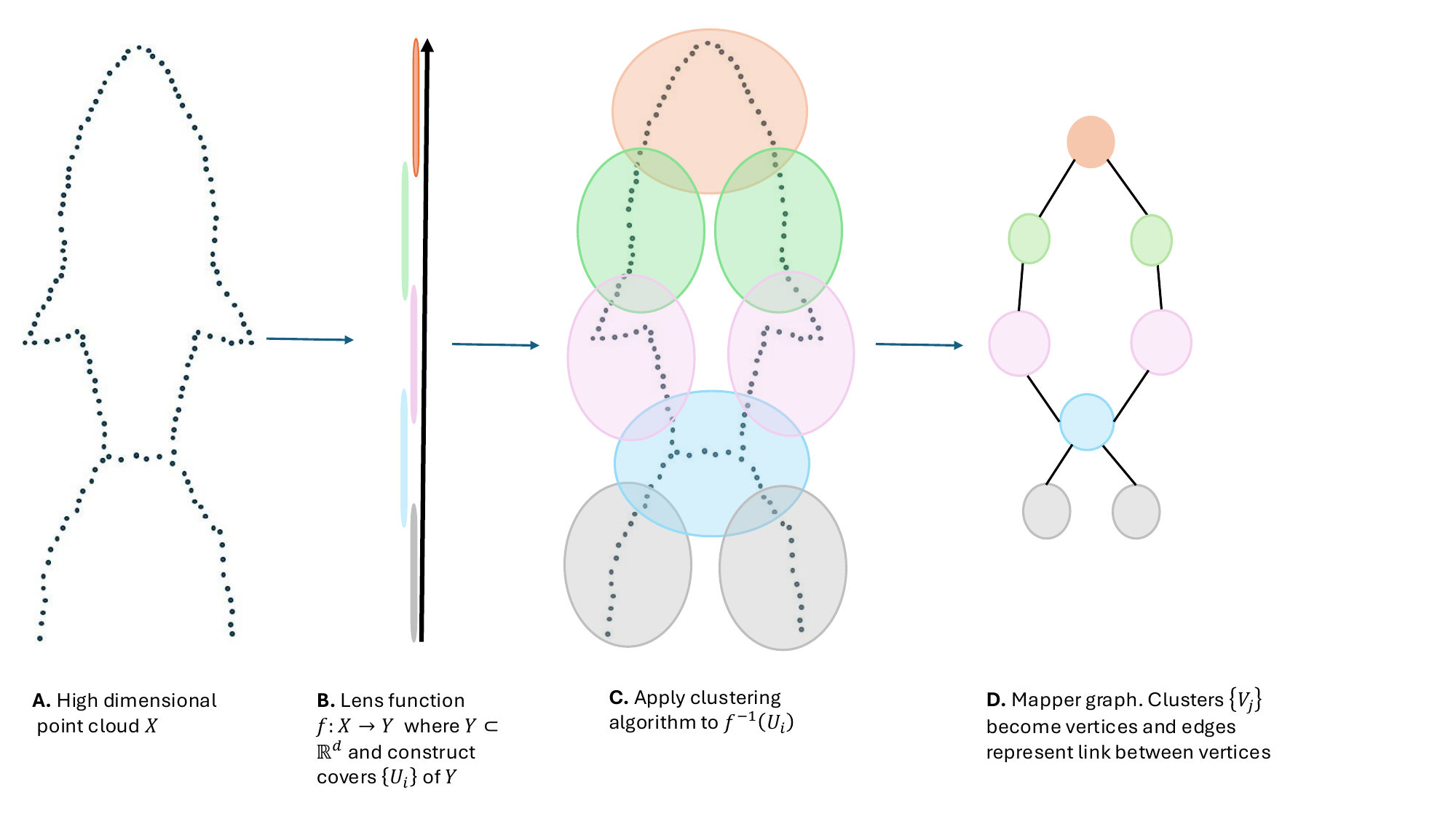}
	\caption{
Overview of the Mapper construction used to obtain topological
representations of datasets. Starting from a high-dimensional dataset
$X$ (A), a lens function maps the data to a lower-dimensional space where
an overlapping cover is constructed (B). Each pullback set is clustered,
and clusters become vertices of the Mapper graph, with edges indicating
overlapping clusters (C). The resulting graph summarizes the multiscale
structural organization of the dataset (D).
}
	\label{fig:mapper}
\end{figure}

The Mapper construction naturally associates each dataset with a
collection of interacting structural components, including refined
pullback regions, overlap relationships, nerve connectivity, and local
attributes. These components define a rich representation whose
mathematical properties can be studied independently of any particular
learning algorithm. The next section formalizes this representation and
develops the mathematical framework used throughout the remainder of
the paper.



\section{Predictive Learning on Mapper Induced Decompositions}
\label{sec:framework}


We investigate whether predictive information is contained not only in the final Mapper graph, but also in these additional components of the Mapper construction. To this end, we treat the entire Mapper induced decomposition as the representation used for learning. This viewpoint allows us to study how predictive information is distributed across different structural components of the Mapper representation, including local statistical summaries, overlap structure, and relational connectivity. 

\subsection{Mapper induced representation spaces}
Let
$
\theta=(f,\mathcal U,\mathcal C)
$
denote the Mapper construction parameters, consisting of a lens
function $f$, a cover of the lens image $\mathcal U$, and a clustering procedure $\mathcal C$.
Associated with a dataset $X$, we define the Mapper induced
representation
\[
\mathcal R_\theta(X)
=
\left(
\mathcal U,
f^{-1}(\mathcal U),
\overline{\mathcal U},
\mathcal N(\overline{\mathcal U}),
\mathrm{Attr}
\right),
\]
where the first four components are described in Section~\ref{sec:prelim}, and $\mathrm{Attr}$ denotes a
collection of attributes attached to elements of the decomposition.
Depending on the application, these attributes may include local
statistical summaries, geometric descriptors, cluster level features,
or other quantities associated with the refined pullback regions and
their interactions.
The usual Mapper graph
\[
M_\theta(X) = 
\operatorname{sk}_1
\big(
\mathcal N(\overline{\mathcal U})
\big)
\]
is therefore only one component of the full representation
$\mathcal R_{\theta}(X)$.
The distinction between
$\mathcal R_{\theta}(X)$
and
$M_\theta(X)$
is fundamental. While the graph records overlap connectivity among
refined clusters, the full representation also retains information
about cover organization, pullback structure, local decomposition
geometry, and associated attributes.

\subsection{Sources of predictive information} \label{sec:sources}

The Mapper induced representation contains several distinct sources of
information that may contribute to prediction.
\begin{itemize}

        \item \emph{Local information.} Each refined pullback region
$
C_i\in\overline{\mathcal U}
$
contains a subset of the original data. Local statistical summaries
computed on these regions may include means, variances, covariance
eigenvalues, signal power, density estimates, cluster sizes, or other
domain specific features.
These summaries capture information about local behavior within the
dataset independently of how the regions are connected.

\item \emph{Structural information.}
A Mapper representation depends on the organization of the underlying decomposition,
including the cover, overlap between neighboring regions, clustering
granularity, and other construction parameters. These structural
properties determine how local regions are formed and related,
influencing the complexity and geometry of the resulting
representation.

\item \emph{Relational information.}
The nerve
$
\mathcal N(\overline{\mathcal U})
$
encodes connectivity among refined pullback regions. This relational
structure captures branching behavior, connectivity patterns, cycles,
and global organization of the decomposition.
Relational information cannot generally be recovered from local
statistics alone.


\end{itemize}

\subsection{Predictive learning on decomposition spaces}

The Mapper induced representation map
\[
\mathcal R_\theta:\mathcal X\to\mathcal R
\]
assigns each input dataset to a structured object in the representation
space $\mathcal R$. Prediction is then performed by applying a learning
map
$
g_\phi:\mathcal R\to\mathcal Y
$
to this representation. Thus the induced predictor on the original data
domain is
$
h_{\theta,\phi} =
g_\phi\circ \mathcal R_\theta,
$
or, equivalently,
\[
h_{\theta,\phi}(X) =
g_\phi\bigl(\mathcal R_\theta(X)\bigr).
\]

This formulation separates the construction of the representation from
the learning rule applied to it. The parameter $\theta$ controls how
the Mapper induced decomposition is formed, while $\phi$ controls how
the resulting structured representation is used for prediction.
Consequently, predictive behavior depends not only on the downstream
model, but also on the decomposition produced by the Mapper
construction.

Different learning rules may use different parts of
$\mathcal R_\theta(X)$. A model using only attributes of refined
pullback regions tests the predictive value of local statistical
summaries. A graph based model using the nerve together with node
attributes tests the role of relational connectivity among local
regions. A model that also incorporates overlap strengths, cover
locations, or multiscale information uses a richer portion of the full
Mapper induced representation.

Thus the framework is not tied to a single predictor architecture. In
our experiments, graph neural networks provide one way to learn from
the attributed nerve structure, while ablated models and perturbed
representations are used to isolate the contribution of different
decomposition components. The purpose is therefore not only to train a
classifier on a Mapper graph, but to study how prediction changes as
different levels of the Mapper induced decomposition become available
to the learning rule.

\subsection{Component wise predictive analysis}

A central objective of the proposed framework is to understand how
different structural components of a Mapper representation contribute
to predictive performance. Rather than evaluating only the accuracy of
the complete representation, we compare it with systematically
modified representations in which selected structural information is
either removed or perturbed. Changes in predictive performance then
provide evidence for the predictive role of the modified component.

In this work, we consider two controlled modifications of the original
Mapper representation. The first replaces the attributed Mapper graph by its disconnected version, preserving all node level attributes while removing edge connectivity. This yields the \emph{No-edge} representation used throughout the experiments. Comparing this
representation with the original Mapper representation isolates the
predictive contribution of relational connectivity. The second
preserves the node attributes but perturbs the graph connectivity,
yielding the \emph{Perturbed} representation. This comparison assesses
whether predictive performance depends on the specific organization of
the Mapper graph rather than merely on the presence of graph
connectivity.

These representation modifications form the basis of the experimental
studies in Section~\ref{sec:exp}. Together with experiments that vary
the Mapper construction parameters, they provide a systematic framework
for analyzing the predictive contribution of structural components,
the robustness of graph connectivity, and the influence of Mapper
construction on learning behavior.















\section{Structural Geometry of Mapper Induced Representations}
\label{sec:geometry}

Section~\ref{sec:framework} introduced Mapper induced representations
as structured objects for learning. Once datasets are mapped into the
representation space
$\mathcal R_\theta(X),$ where $ X\in\mathcal X,$
a fundamental mathematical question is how these representations should
be compared and organized. Such questions naturally arise in tasks such
as similarity search, clustering, retrieval, and kernel based learning,
where comparisons are performed directly on the representation space
rather than on the original datasets.
 Mapper representations consist of
multiple interacting structural components and their geometry cannot be described solely by
vector space distances or inner products. Instead, it requires notions
of equivalence, structural dissimilarity, similarity, and multiscale
organization that are intrinsic to the representation itself.

The purpose of this section is to develop this intrinsic geometry of
Mapper representation spaces. We first remove implementation dependent
labeling through an equivalence relation and quotient construction.
Next, we introduce structural dissimilarities and kernels for comparing
representations. Finally, we develop a refinement hierarchy that
captures the multiscale organization induced by different Mapper
parameter choices. These constructions provide a mathematical framework for comparing and characterizing Mapper representations through their structural organization.

\subsection{Equivalence of Mapper representations} \label{sec:equivalence_mapper}

A fundamental requirement for learning on Mapper induced
representations is invariance under relabeling of refined pullback
regions. The ordering of clusters produced by a clustering algorithm is
an implementation artifact and carries no intrinsic information about
the underlying dataset. Let
\[
\mathcal R_\theta(X)
=
\left(
\mathcal U,
f^{-1}(\mathcal U),
\overline{\mathcal U},
\mathcal N(\overline{\mathcal U}),
A
\right)
\]
and
\[
\mathcal R_{\theta'}(Y)
=
\left(
\mathcal U',
f'^{-1}(\mathcal U'),
\overline{\mathcal U}',
\mathcal N(\overline{\mathcal U}'),
A'
\right)
\]
be two Mapper-induced representations.

\begin{definition}[Representation equivalence]
We say that
$\mathcal R_\theta(X)
\sim
\mathcal R_{\theta'}(Y)$
if there exists a bijection
$\varphi:
\overline{\mathcal U}
\rightarrow
\overline{\mathcal U}'$
such that intersection relations among refined pullback regions,
attributes attached to corresponding regions, and associated overlap
and nerve connectivity relations are preserved.
\end{definition}

The relation \(\sim\) is an equivalence relation on the collection of
labeled Mapper induced representations. We therefore define the Mapper
representation space as the quotient
$
\mathcal R
=
\widetilde{\mathcal R}/\sim,
$
where $\widetilde{\mathcal R}$ denotes the collection of labeled
Mapper induced representations. For each labeled representation
$
R\in\widetilde{\mathcal R},
$
its equivalence class is
$
[R]
=
\{R'\in\widetilde{\mathcal R}:R'\sim R\}.
$. 
The quotient space
$
\mathcal R
$
is the collection of all such equivalence classes.
This quotient construction removes dependence on arbitrary cluster
labels and ensures that subsequent distances, kernels, and complexity
functionals depend on the intrinsic decomposition structure rather than
on implementation specific orderings of refined regions.

The quotient space
$\mathcal R$
is the natural domain for learning and comparison, because two labeled
representations in the same equivalence class differ only by
implementation dependent indexing of refined pullback regions. However,
most quantities are initially computed from a labeled representation:
for example, a complexity score, a graph statistic, an attribute
summary, or a kernel feature map may be evaluated after the refined
regions have been assigned labels. To use such quantities on the
quotient space, we must verify that their values do not change when a
different representative of the same equivalence class is chosen. The
following proposition records this well-definedness condition.
 
\begin{proposition}[Well-defined invariant functionals] \label{prop:well_defined_functional}
Let
$
F:\widetilde{\mathcal R}\to\mathbb R
$
be a functional satisfying
$
F(\mathcal R_1)=F(\mathcal R_2)
\quad
\text{whenever }
\mathcal R_1\sim\mathcal R_2.
$
Then $F$ induces a well-defined functional
\[
\bar F:\mathcal R\to\mathbb R
\]
on the quotient representation space by
$
\bar F([\mathcal R])=F(\mathcal R).
$
\end{proposition}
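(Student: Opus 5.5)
The argument is the standard universal property of a quotient by an equivalence relation, so the plan is to verify directly that the proposed formula for $\bar F$ defines a function on $\mathcal R=\widetilde{\mathcal R}/\sim$. First, I will record that $\sim$ is indeed an equivalence relation on $\widetilde{\mathcal R}$, as stated in Section~\ref{sec:equivalence_mapper}. Reflexivity uses the identity bijection on $\overline{\mathcal U}$. Symmetry uses the inverse bijection $\varphi^{-1}$, which again preserves intersection relations, attributes, overlaps and nerve connectivity because these are preserved in both directions by a structure-preserving bijection. Transitivity uses the composite $\varphi'\circ\varphi$. This guarantees that the classes $[R]$ partition $\widetilde{\mathcal R}$. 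Consequently every element of $\mathcal R$ is of the form $[R]$ for at least one representative $R$, and two classes are either equal or disjoint.

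Next, I will define $\bar F$ by choosing, for a class $\xi\in\mathcal R$, any $R\in\widetilde{\mathcal R}$ with $[R]=\xi$ and setting $\bar F(\xi)=F(R)$. The only point to check, and the sole substantive step, is independence of the choice of representative. Suppose $[\mathcal R_1]=[\mathcal R_2]$. Then $\mathcal R_2\in[\mathcal R_1]$, so by definition of the class $\mathcal R_1\sim\mathcal R_2$. The hypothesis on $F$ then gives $F(\mathcal R_1)=F(\mathcal R_2)$. Hence the value assigned to the class does not depend on the representative, and $\bar F$ is a well-defined map $\mathcal R\to\mathbb R$. Existence of a value follows since each class is nonempty, containing its defining representative by reflexivity.

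Finally, I will note the factorization $\bar F\circ\pi=F$, where $\pi:\widetilde{\mathcal R}\to\mathcal R$, $R\mapsto[R]$, is the quotient map. Moreover, $\bar F$ is the unique function with this property, since $\pi$ is surjective. I do not expect any genuine obstacle. The only delicate point is that the equivalence-relation properties (especially symmetry and transitivity) depend on the informal phrase ``relations \ldots are preserved'' being read as preservation in both directions under $\varphi$. I will make that reading explicit, so that inverses and composites of admissible bijections remain admissible.
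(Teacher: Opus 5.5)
Your proposal is correct and follows essentially the same route as the paper: define $\bar F$ by choosing a representative, and use the invariance hypothesis on two representatives of the same class to show the value does not depend on that choice. Your check that $\sim$ is an equivalence relation (reading ``preserved'' in both directions) and the factorization $\bar F\circ\pi=F$ go beyond the paper, which simply asserts the relation is an equivalence, but they do not change the core argument.
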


\begin{proof}
An element of the quotient space $\mathcal R=\widetilde{\mathcal R}/\sim$
is an equivalence class
$[\mathcal R]$. To define $\bar F$ on this class, choose any representative
$\mathcal R\in[\mathcal R]$ and set
$
\bar F([\mathcal R])=F(\mathcal R).$
We need to show that this value is independent of the representative.
Suppose $\mathcal R_1,\mathcal R_2\in[\mathcal R]$. Then
$
\mathcal R_1\sim\mathcal R_2.
$
By the invariance assumption on $F$,
\[
F(\mathcal R_1)=F(\mathcal R_2).
\]
Therefore choosing $\mathcal R_1$ or $\mathcal R_2$ gives the same
value of $\bar F([\mathcal R])$. Hence $\bar F$ is well-defined on
equivalence classes.
\end{proof}

This proposition will be used implicitly in the definitions below:
whenever we define a complexity score, distance, or kernel for
Mapper induced representations, it must either be defined directly on
equivalence classes or arise from a relabeling invariant functional on
labeled representations.

\subsection{Structural distances between Mapper representations}
\label{sec:distance_mapper}

We introduce a family of structural distance functionals
defined on the representation space. Rather than imposing a single
canonical distance, we decompose dissimilarity into multiple
interpretable components corresponding to the major sources of
information identified in Section~ \ref{sec:sources}.
 Table~\ref{tab:distance-components}
summarizes these component wise dissimilarities.

\begin{table}[h!]
\centering
\small
\begin{tabular}{p{2.3cm} p{1.8cm} p{8cm}}
\hline
\textbf{Component} & \textbf{Notation} & \textbf{Interpretation} \\
\hline

Local attribute
&
$
d_{\mathrm{loc}}(R_1,R_2)
$
&
Measures differences between local attributes of refined pullback
regions, such as cluster summaries, statistical moments, geometric
descriptors, or domain specific features.
\\[1ex]

Overlap
&
$
d_{\mathrm{ov}}(R_1,R_2)
$
&
Measures differences in overlap organization among refined regions,
capturing the strength and structure of interactions between
neighboring regions.
\\[1ex]

Relational
&
$
d_{\mathrm{rel}}(R_1,R_2)
$
&
Measures differences in the nerve structure, including connectivity,
branching patterns, and cycle organization.
\\[1ex]


\hline
\end{tabular}
\caption{Component wise structural dissimilarities between Mapper induced representations.}
\label{tab:distance-components}
\end{table}

Each component captures a different aspect of structural variation.
Depending on the application, one or more components may dominate the
representation geometry. For example, in some tasks predictive
differences may be driven primarily by local statistical attributes,
whereas in others relational or multiscale structure may play a more
significant role.
These component wise dissimilarities may be combined into a single
structural distance functional.

\begin{definition}[Structural distance functional] \label{def:structural_dist}
Let
$
\alpha,\beta,\gamma\ge 0
$
be weighting coefficients. The structural distance between two Mapper
representations is defined by $D:\mathcal R\times\mathcal R\to\mathbb R_{\ge 0}$ with
\[
D(R_1,R_2)
=
\alpha d_{\mathrm{loc}}(R_1,R_2)
+
\beta d_{\mathrm{ov}}(R_1,R_2)
+
\gamma d_{\mathrm{rel}}(R_1,R_2)
\]
\end{definition}

The weighting coefficients determine the relative importance assigned
to different structural components. 
To ensure that structural comparisons are intrinsic to the
representation and independent of arbitrary cluster labels, the
distance functional must be invariant under representation
equivalence.

The structural distance framework introduced here equips the
representation space with an intrinsic notion of geometry.
Together with the kernel framework introduced below, it provides a
general mathematical foundation for comparing Mapper
representations independently of any particular predictive model.
Together with the kernel based similarity framework introduced in the
next subsection, it offers complementary tools for comparing
Mapper induced representations.

\subsection{Kernels on representation spaces} \label{sec:kernel_mapper}

While structural distances quantify global dissimilarity between
Mapper induced representations, kernels provide a complementary notion
of similarity based on shared structural characteristics. Rather than
measuring how far two representations are from one another, a kernel
measures how strongly they agree with respect to selected structural
features. This perspective is particularly useful in similarity based
learning methods such as support vector machines, kernel regression,
and nearest neighbor classification.

Kernel methods provide a flexible framework for learning on structured
objects through similarity functions defined implicitly by feature maps
\cite{scholkopf2002kernels}. In the present setting, this allows
learning directly on Mapper induced representations without requiring
explicit vectorization of the full decomposition. Similar ideas have
been used in graph learning, where kernels are constructed from
invariant structural descriptors rather than Euclidean embeddings
\cite{shervashidze2011weisfeiler}. 


\begin{definition}[Structural feature map]
Let $R\in\mathcal R$
be a Mapper induced representation. A structural feature map is a
function $\Phi:\mathcal R\rightarrow\mathbb R^m$ 
whose coordinates encode structural features derived from the
decomposition.
\end{definition}
The structural feature vector $\Phi(R)$
may contain quantities derived from different levels of the Mapper
representation, including local, overlap, and relational structure.
Examples include:
\begin{itemize}
    \item distributions of cluster sizes;
    \item node degree statistics of the nerve graph;
    \item counts of leaf nodes and branch nodes;
    \item cycle-related summaries such as the first Betti number;
    \item overlap multiplicity histograms between neighboring regions.
\end{itemize}

Each component of the structural feature map is chosen to be an
invariant functional in the sense of Section~ \ref{sec:equivalence_mapper}. Thus, if $\Phi(R)=\bigl(F_1(R),\dots,F_m(R)\bigr),$ then each
$F_j:\widetilde{\mathcal R}\to\mathbb R$ 
depends only on intrinsic properties of the decomposition and is
independent of arbitrary relabeling of refined pullback regions.

\begin{proposition}[Well-defined structural feature map]
Let
\[
\Phi(R)=\bigl(F_1(R),\dots,F_m(R)\bigr)
\]
be a structural feature map whose components are invariant functionals.
Then \(\Phi\) is well-defined on the quotient representation space
\(\mathcal R=\widetilde{\mathcal R}/\sim\).
\end{proposition}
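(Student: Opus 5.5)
The plan is to reduce the statement to Proposition~\ref{prop:well_defined_functional}, applied one coordinate at a time, and then assemble the coordinates. First, view $\Phi$ as initially defined on labeled representations, $\Phi:\widetilde{\mathcal R}\to\mathbb R^m$, $\Phi(R)=\bigl(F_1(R),\dots,F_m(R)\bigr)$. For each $j\in\{1,\dots,m\}$, the hypothesis gives $F_j(R_1)=F_j(R_2)$ whenever $R_1\sim R_2$. So Proposition~\ref{prop:well_defined_functional} yields a well-defined functional $\bar F_j:\mathcal R\to\mathbb R$ with $\bar F_j([R])=F_j(R)$.

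Next, define $\bar\Phi:\mathcal R\to\mathbb R^m$ by
\[
\bar\Phi([R])=\bigl(\bar F_1([R]),\dots,\bar F_m([R])\bigr).
\]
Each coordinate is a well-defined function on equivalence classes, so the tuple is as well. Two maps into $\mathbb R^m$ agree exactly when all their coordinates agree, which lets us argue coordinatewise.

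To check independence of representative directly: if $R_1,R_2\in[R]$, then $R_1\sim R_2$, so $F_j(R_1)=F_j(R_2)$ for every $j$. Hence $\Phi(R_1)=\Phi(R_2)$ as vectors, and $\bar\Phi([R])=\Phi(R)$ does not depend on the chosen representative. Equivalently, $\Phi$ factors through the quotient map $\pi:\widetilde{\mathcal R}\to\mathcal R$, with $\Phi=\bar\Phi\circ\pi$.

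There is no real obstacle here. The only point needing care is notation: the statement writes $\Phi$ on $\mathcal R$, but the $F_j$ are defined on $\widetilde{\mathcal R}$. The proof should therefore state explicitly that ``well-defined on the quotient'' means the factorization $\Phi=\bar\Phi\circ\pi$ exists. Once that is said, the result is a direct, componentwise application of Proposition~\ref{prop:well_defined_functional}.
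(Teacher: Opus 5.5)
Your proposal is correct and follows the paper's own proof: apply Proposition~\ref{prop:well_defined_functional} to each invariant coordinate $F_j$, then conclude that the tuple $\Phi$ is well-defined on equivalence classes. Your explicit factorization $\Phi=\bar\Phi\circ\pi$ through the quotient map is a slightly more careful phrasing of the same argument.
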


\begin{proof}
By Proposition~ \ref{prop:well_defined_functional}, each invariant functional
$F_j:\widetilde{\mathcal R}\to\mathbb R$
induces a well-defined functional on the quotient representation
space. Since every component of
$\Phi$
is well-defined on equivalence classes, the vector-valued map
$\Phi:\mathcal R\to\mathbb R^m$
is also well-defined.
\end{proof}

\begin{definition}[Structural kernel]
Given a structural feature map
$
\Phi:\mathcal R\rightarrow\mathbb R^m,
$
the associated structural kernel is defined by
$
K:\mathcal R\times\mathcal R\rightarrow\mathbb R
$
with
\[
K(R_1,R_2)
=
\langle
\Phi(R_1),\Phi(R_2)
\rangle.
\]
\end{definition}

More generally, one may consider normalized or nonlinear kernels
constructed from structural feature vectors. A common normalized
version is the cosine kernel
\[
K_{\mathrm{cos}}(R_1,R_2)
=
\frac{
\langle \Phi(R_1),\Phi(R_2)\rangle
}{
\|\Phi(R_1)\|\,\|\Phi(R_2)\|
},
\]
which measures structural similarity independently of feature scale.

Unlike structural distance functionals, which emphasize global
differences between decompositions, structural kernels capture shared
structural characteristics and may assign high similarity even to
representations with large structural distance if they exhibit similar
connectivity, overlap organization, or topological summaries. Since the
structural feature map is well-defined on the quotient representation
space, the induced kernel is invariant under representation
equivalence and depends only on intrinsic properties of the
Mapper induced decomposition. Structural kernels therefore provide a
natural mechanism for incorporating Mapper specific structure into
kernel based learning algorithms.

\subsection{Refinement of Mapper representations}

A central feature of Mapper induced representations is their dependence
on the construction parameters
$\theta=(f,\mathcal U,\mathcal C).$
Different choices of cover resolution, overlap, and clustering
granularity may produce substantially different decompositions of the
same dataset. Coarser parameter choices typically yield simpler
representations with fewer refined pullback regions, whereas finer
choices produce more detailed decompositions with increased structural
complexity.

This dependence naturally introduces a multiscale family of
representations for a fixed dataset. Rather than viewing these
representations as unrelated objects, it is useful to organize them
through a refinement relation that captures how one decomposition
resolves another at a finer scale.

\begin{definition}[Refinement relation]
Let
$
R_{\theta_1}(X),\; R_{\theta_2}(X)\in\mathcal R
$
be two Mapper induced representations of the same dataset \(X\).
We say that
$
R_{\theta_2}(X)
$
is a refinement of
$
R_{\theta_1}(X)
$
if every refined pullback region in
$
\overline{\mathcal U}_{\theta_2}
$
is contained in some refined pullback region of
$
\overline{\mathcal U}_{\theta_1}.
$
We write
\[
R_{\theta_1}(X)\preceq R_{\theta_2}(X).
\]
\end{definition}

Intuitively, refinement means that the finer decomposition resolves
regions of the coarser representation into smaller components while
preserving the underlying data organization.
The refinement relation induces a natural hierarchy on the
representation space.
A refinement relation alone provides only pairwise comparisons between
Mapper induced representations. To systematically organize
representations across multiple scales, it is useful to equip the
representation space with an ordering structure. In particular, a
partial ordering provides a mathematically consistent notion of when
one representation is coarser or finer than another, thereby allowing
multiscale Mapper representations to be arranged into hierarchical
chains or more general refinement hierarchies.

\begin{proposition}[Multiscale ordering]
The refinement relation
$
\preceq
$
defines a partial ordering on compatible Mapper induced
representations.
\end{proposition}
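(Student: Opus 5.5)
The plan is to verify reflexivity, transitivity and antisymmetry for $\preceq$ on a fixed dataset $X$. The preliminary point is what \emph{compatible} should mean, since antisymmetry cannot hold on raw covers in general. I will take compatible Mapper induced representations to be those of the same dataset $X$ whose refined pullback covers $\overline{\mathcal U}_\theta$ are \emph{irredundant} antichains: no refined region is strictly contained in another region of the same cover. I will also identify representations with equal refined covers, working in the quotient $\mathcal R$ of Section~\ref{sec:equivalence_mapper}. The relation depends only on the collections $\overline{\mathcal U}_\theta$ as families of subsets of $X$, which are relabeling invariant. So by the same reasoning as Proposition~\ref{prop:well_defined_functional}, $\preceq$ descends to equivalence classes.

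\textbf{Reflexivity.} Every region $C\in\overline{\mathcal U}_\theta$ is contained in itself, a region of $\overline{\mathcal U}_\theta$. Hence $R_\theta(X)\preceq R_\theta(X)$.

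\textbf{Transitivity.} Suppose $R_{\theta_1}\preceq R_{\theta_2}\preceq R_{\theta_3}$. Take $C\in\overline{\mathcal U}_{\theta_3}$. There is $C'\in\overline{\mathcal U}_{\theta_2}$ with $C\subseteq C'$, and then $C''\in\overline{\mathcal U}_{\theta_1}$ with $C'\subseteq C''$. Transitivity of inclusion gives $C\subseteq C''$, so $R_{\theta_1}\preceq R_{\theta_3}$.

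\textbf{Antisymmetry.} This is the main obstacle. Suppose $R_{\theta_1}\preceq R_{\theta_2}$ and $R_{\theta_2}\preceq R_{\theta_1}$, and take $A\in\overline{\mathcal U}_{\theta_1}$. The second relation gives $B\in\overline{\mathcal U}_{\theta_2}$ with $A\subseteq B$. The first gives $A'\in\overline{\mathcal U}_{\theta_1}$ with $B\subseteq A'$. Then $A\subseteq A'$ with both in $\overline{\mathcal U}_{\theta_1}$, so irredundancy forces $A=A'$ and hence $A=B\in\overline{\mathcal U}_{\theta_2}$. By symmetry the two refined covers coincide as families of subsets of $X$.

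It remains to pass from equal refined covers to equal points of $\mathcal R$. The nerve and its overlap relations are determined by $\overline{\mathcal U}$, and attributes computed from the regions agree. So the identity bijection witnesses $R_{\theta_1}\sim R_{\theta_2}$, which gives equality in the quotient, assuming compatibility also fixes the attribute map as a function of the regions.

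I expect the real difficulty to be stating the compatibility hypothesis explicitly. Without irredundancy, a cover together with one containing an extra nested cluster are mutual refinements but are not equal, so $\preceq$ would only be a preorder. If irredundancy turns out to be undesirable, the fallback is to state the result as a partial order on the quotient of this preorder by mutual refinement.
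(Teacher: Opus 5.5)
Your proof is correct and follows the same three-part verification as the paper. Reflexivity and transitivity are essentially identical to the paper's.

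The real difference is antisymmetry. The paper simply asserts that mutual refinement forces the refined decompositions to ``contain the same regions up to representation equivalence,'' and it never says what ``compatible'' means. You instead make the needed hypothesis explicit (irredundant refined covers, attributes determined by the regions). You then actually derive $\overline{\mathcal U}_{\theta_1}=\overline{\mathcal U}_{\theta_2}$ from the chain $A\subseteq B\subseteq A'$.

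Your nested-cluster example shows that some such condition is indispensable, and the paper's proof never supplies one. Take $\{A,B\}$ with $B\subsetneq A$ versus $\{A\}$. These are mutual refinements, but their nerves (an edge versus a single vertex) are not equivalent. So your version buys a true statement with a checkable condition, at the price of restricting the domain.

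That price is real. When clustering is by connected components (as for MUTAG and PROTEINS here), a cluster of one pullback set lying inside an overlap is automatically contained in a cluster of the neighbouring pullback set, and usually strictly. So irredundancy excludes many actual Mapper outputs, and your fallback (a partial order on the quotient by mutual refinement) is arguably the more faithful formulation. Your own argument, applied to an inclusion-maximal $A$, also shows that two covers are mutual refinements exactly when they have the same inclusion-maximal regions. This describes that quotient concretely.

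One claim I would drop is that $\preceq$ descends to the paper's quotient $\mathcal R$. That holds if $\sim$ is a mere relabelling of the same subsets of $X$. The paper's $\sim$, however, is an abstract bijection, even between representations of different datasets, preserving only intersection patterns and attributes. Set containment is not invariant under such bijections. Your argument only needs, and is cleanest when stated on, representations of $X$ identified by equality of their refined covers.
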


\begin{proof}
We verify the three defining properties of a partial order on the
quotient representation space.
 \emph{Reflexivity.}
For any representation
$
R_\theta(X),
$
every refined pullback region
$
C\in\overline{\mathcal U}_\theta
$
satisfies
$
C\subseteq C.
$
Hence
$
R_\theta(X)\preceq R_\theta(X).
$
\emph{Transitivity.}
Suppose
$
R_{\theta_1}(X)\preceq R_{\theta_2}(X)
\quad\text{and}\quad
R_{\theta_2}(X)\preceq R_{\theta_3}(X).
$
For every
$
C_3\in\overline{\mathcal U}_{\theta_3},
$
there exists
$
C_2\in\overline{\mathcal U}_{\theta_2}
$
such that
$
C_3\subseteq C_2,
$
and there exists
$
C_1\in\overline{\mathcal U}_{\theta_1}
$
such that
$
C_2\subseteq C_1.
$
Therefore
$
C_3\subseteq C_1,
$
which implies
$
R_{\theta_1}(X)\preceq R_{\theta_3}(X).
$
\emph{Antisymmetry.}
If
$
R_{\theta_1}(X)\preceq R_{\theta_2}(X)
\quad\text{and}\quad
R_{\theta_2}(X)\preceq R_{\theta_1}(X),
$
then the refined pullback decompositions contain the same regions up to
representation equivalence. Hence the corresponding equivalence classes
coincide in the quotient representation space.
\end{proof}

The refinement relation provides a mathematical ordering of Mapper
representations generated under different construction parameters,
formalizing the notion that finer parameter choices yield more detailed
decompositions of the same dataset. Although the experiments do not
explicitly verify this ordering, they examine families of
parameter-dependent representations using the structural distances
introduced in this section. The resulting distance matrices and
hierarchical clustering provide an empirical view of the relationships
among these representations.

Taken together, the constructions developed in this section equip the
space of Mapper induced representations with an intrinsic mathematical
geometry through representation equivalence, structural distances,
kernel based similarities, and refinement. These concepts provide a
principled framework for comparing and organizing Mapper
representations independently of any particular learning algorithm. The
next section complements this geometric viewpoint by introducing
representation complexity and studying the stability of graph-based
predictors acting on Mapper representations.






\section{Complexity and Stability of Mapper Representations}
\label{sec:complexity_stability}

The predictive utility of a Mapper-induced representation depends not
only on the structural information it encodes, but also on its
complexity and robustness. Highly coarse decompositions may fail to
capture informative local structure, while excessively fine
decompositions may become sensitive to noise, unstable under
perturbations, and prone to overfitting. Understanding this trade-off
is therefore essential for principled learning on Mapper
representations.
This section develops quantitative notions of representation
complexity and stability for Mapper-induced decompositions. 

\subsection{Topological complexity and overlap interaction}
\label{sec:complexity}

A central question in decomposition-aware learning is how to quantify
the complexity of a Mapper induced representation. Unlike classical
learning settings, where complexity is often associated with ambient
dimension or model capacity, complexity in the present framework
emerges from the structure of the Mapper decomposition itself. In
particular, a representation may become more complex either by
increasing the number of refined pullback regions, by developing richer
topological connectivity, or by exhibiting stronger overlap
interactions among neighboring regions.
Quantifying representation complexity is 
essential for understanding the trade-off between structural richness
and predictive robustness.

The proposed complexity functional is motivated by the observation
that structural complexity in Mapper representations arises from three
distinct mechanisms. First, increasing the number of refined pullback
regions increases the resolution of the decomposition and the number of
local objects that must be represented. Second, richer connectivity of
the induced nerve graph produces increasingly complex global
organization, reflected by the presence of cycles and branching
patterns. Third, the overlap among neighboring regions determines the
extent to which local structures interact. These three mechanisms
capture complementary aspects of Mapper representations---local
resolution, global topology, and interaction structure---and together
motivate the complexity measure introduced below.
Guided by these three structural mechanisms, we define the topological
complexity of a Mapper representation $R_\theta$ by
\[
TC(R_\theta(X))= 
a|\overline{\mathcal U}|
+
b\beta_1
+
c\rho_{\mathrm{overlap}},
\]
where $a,b,c>0$ are weighting coefficients, and
\begin{itemize}
\item
$|\overline{\mathcal U}|$
denotes the number of refined pullback regions and measures local
decomposition complexity;

\item $\beta_1$
denotes the first Betti number of the induced Mapper graph and captures
global topological complexity through cycle structure;

\item
$
\rho_{\mathrm{overlap}}
$
measures the aggregate overlap interaction among neighboring refined
regions.
\end{itemize}


To quantify overlap interaction, let
$C_1,\ldots,C_m\in\overline{\mathcal U}$
denote the refined pullback regions. In this work we measure overlap by
the normalized overlap ratio
\[
\rho_{\mathrm{overlap}}
=
\frac{
\sum_{i<j}|C_i\cap C_j|
}{
\sum_{i=1}^{m}|C_i|
},
\]
which represents the total amount of shared membership relative to the
total size of the refined pullback regions. Larger values indicate
stronger interaction among neighboring regions through overlap. 
Other overlap measures, such as
the raw overlap count
$
|C_i\cap C_j|,
$
the Jaccard overlap
$
\frac{|C_i\cap C_j|}{|C_i\cup C_j|},
$
or the minimum-normalized overlap
$
\frac{|C_i\cap C_j|}{\min(|C_i|,|C_j|)}.
$


\subsection{Predictive stability of Mapper graphs}
\label{sec:stability}

The structural geometry developed in Section~\ref{sec:geometry}
concerns intrinsic comparison of complete Mapper induced
representations. Here we study a complementary problem: the stability
of graph based predictors acting on the attributed Mapper graph derived
from these representations. Whereas classical stability results for
Mapper investigate how perturbations of the input data or Mapper
construction affect the resulting topological summary
\cite{carriere2018mapper,carriere2018statistical}, our objective is to
understand how perturbations of the attributed Mapper graph propagate
through a downstream learning model and influence its predictions.

For graph based predictors, we represent the Mapper graph
$M_\theta(X)$ by its weighted adjacency matrix
$A_\theta$
together with the corresponding node-feature matrix
$X_\theta$,
whose rows contain the attributes associated with the refined pullback
regions. Thus,
\[
M_\theta(X)\equiv(A_\theta,X_\theta),
\]
and the following analysis concerns predictors acting on this
attributed graph representation. Extending the stability analysis to
learning architectures that operate directly on the complete
representation
$R_\theta(X)$,
including the cover, pullback organization, and overlap structure,
remains an important direction for future work.


     Let
$M_\theta=(A_\theta,X_\theta)$ and
$\widetilde M_\theta =(\widetilde A_\theta,\widetilde X_\theta)
$
denote the original and perturbed attributed-graph projections. 
    We measure the perturbation magnitude by
    $
    \|A_\theta-\widetilde A_\theta\|_2
    \quad\text{and}\quad
    \|X_\theta-\widetilde X_\theta \|_F
    $. Let $H^{(k)}\in\mathbb R^{m\times d_k}$
denote the node embedding matrix after the $k$-th message-passing
layer, where each row corresponds to the feature representation of a
refined pullback region. 
Let $H^{(0)}=X_\theta ,$ $\widetilde H^{(0)}=\widetilde X_\theta ,$
and define the two message-passing sequences by
\[
H^{(k+1)}
=
\sigma\!\left(A_\theta H^{(k)}W_k\right),
\qquad
\widetilde H^{(k+1)}
=
\sigma\!\left(\widetilde A_\theta \widetilde H^{(k)}W_k\right),
\]
for \(k=0,\ldots,K-1\). The same trained weight matrices \(W_k\)
are used in both evaluations; thus, the analysis isolates sensitivity
to the input representation rather than variation caused by
retraining the model.

Let
$
g_\phi(M_\theta)=g_\phi(A_\theta,X_\theta)
$
denote a graph based predictor acting on the attributed Mapper graph.
Following the standard message passing framework, node embeddings are
first computed through successive graph convolution layers. A graph level
representation is then obtained by applying a pooling operator
$P:\mathbb{R}^{m\times d_K}\rightarrow\mathbb{R}^{d},$
which aggregates the final node embeddings into a single graph
embedding. Finally, an output map $q:\mathbb{R}^{d}\rightarrow\mathcal Y$
produces the prediction. Thus,
\[
g_\phi(A_\theta,X_\theta)
=
q\!\left(P(H^{(K)})\right).
\]
To establish the perturbation bound, we impose the following standard assumptions on the graph predictor. These assumptions are commonly used in the stability analysis of neural networks and graph neural networks. The norms appearing below are the natural norms associated with the domains and codomains of the corresponding mappings.
\[
\|\sigma(U)-\sigma(V)\|_F
\le L_\sigma\|U-V\|_F,
\]
\[
\|P(U)-P(V)\|_2
\le L_P\|U-V\|_F,
\]
and
\[
d_{\mathcal Y}(q(u),q(v))
\le L_q\|u-v\|_2.
\]
Suppose further that
\[
\|A_\theta \|_2,\|\widetilde A_\theta \|_2\le M_A,
\qquad
\|W_k\|_2\le M_k,
\]
and
\[
\|H^{(k)}\|_F,\|\widetilde H^{(k)}\|_F\le B_k
\]
for every layer \(k\).



\begin{proposition}[Layerwise perturbation bound]\label{prop:layerwise_bound}
Let
$
\Delta_k
=
\|H^{(k)}-\widetilde H^{(k)}\|_F.
$
Then, for \(k=0,\ldots,K-1\),
\[
\Delta_{k+1}
\le
L_\sigma M_k
\left(
M_A\Delta_k
+
B_k\|A_\theta -\widetilde A_\theta \|_2
\right).
\]
\end{proposition}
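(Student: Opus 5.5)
We apply the Lipschitz property of $\sigma$ to reduce the layer difference to a difference of pre-activations, then split that difference with an add-and-subtract step so that each term isolates one source of perturbation.

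\emph{Step 1 (activation).} By the Lipschitz assumption on $\sigma$,
\[
\Delta_{k+1}
=\bigl\|\sigma(A_\theta H^{(k)}W_k)-\sigma(\widetilde A_\theta\widetilde H^{(k)}W_k)\bigr\|_F
\le L_\sigma\,\bigl\|A_\theta H^{(k)}W_k-\widetilde A_\theta\widetilde H^{(k)}W_k\bigr\|_F .
\]

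\emph{Step 2 (splitting).} We write
\[
A_\theta H^{(k)}-\widetilde A_\theta\widetilde H^{(k)}
=(A_\theta-\widetilde A_\theta)H^{(k)}+\widetilde A_\theta\bigl(H^{(k)}-\widetilde H^{(k)}\bigr).
\]
The choice of which intermediate term to insert matters. We pair $\widetilde A_\theta$ with the embedding difference and $H^{(k)}$ with the adjacency difference. Both $\|\widetilde A_\theta\|_2\le M_A$ and $\|H^{(k)}\|_F\le B_k$ are available by hypothesis, so this produces exactly the two constants appearing in the statement. The symmetric choice would also work, since both $A_\theta,\widetilde A_\theta$ and both $H^{(k)},\widetilde H^{(k)}$ are bounded.

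\emph{Step 3 (mixed-norm estimates).} After the triangle inequality, each piece is controlled by the standard inequalities $\|XY\|_F\le\|X\|_2\|Y\|_F$ and $\|YW\|_F\le\|Y\|_F\|W\|_2$. This gives
\[
\bigl\|(A_\theta-\widetilde A_\theta)H^{(k)}W_k\bigr\|_F\le \|A_\theta-\widetilde A_\theta\|_2\,B_k\,M_k,
\]
and
\[
\bigl\|\widetilde A_\theta(H^{(k)}-\widetilde H^{(k)})W_k\bigr\|_F\le M_A\,\Delta_k\,M_k .
\]
Summing these two bounds, factoring out $M_k$, and multiplying by $L_\sigma$ yields the claimed inequality.

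\emph{Main obstacle.} The only delicate point is justifying the mixed-norm inequalities in Step 3. We will prove $\|XY\|_F\le\|X\|_2\|Y\|_F$ column by column: $\|XY\|_F^2=\sum_j\|Xy_j\|_2^2\le\|X\|_2^2\sum_j\|y_j\|_2^2$. The right-multiplication version follows by applying the same argument to transposes. Everything else is bookkeeping.
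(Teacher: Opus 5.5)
Your proposal is correct and follows essentially the same argument as the paper: Lipschitz continuity of $\sigma$, an add-and-subtract split of $A_\theta H^{(k)}-\widetilde A_\theta\widetilde H^{(k)}$, and the mixed-norm inequalities $\|XY\|_F\le\|X\|_2\|Y\|_F$ and $\|YW\|_F\le\|Y\|_F\|W\|_2$. The only cosmetic differences are that the paper factors out $\|W_k\|_2$ before splitting and uses the mirror-image decomposition $A_\theta(H^{(k)}-\widetilde H^{(k)})+(A_\theta-\widetilde A_\theta)\widetilde H^{(k)}$, bounding $\|A_\theta\|_2$ and $\|\widetilde H^{(k)}\|_F$ instead, which you already noted is equally valid.
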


\begin{proof}
Using the definition of the two message-passing layers,
$H^{(k+1)}$ and $\widetilde H^{(k+1)}$, and
by Lipschitz continuity of \(\sigma\),
\[
\Delta_{k+1}
\le
L_\sigma
\|A_\theta H^{(k)}W_k-\widetilde A_\theta \widetilde H^{(k)}W_k\|_F.
\]
Using submultiplicativity,
\[
\Delta_{k+1}
\le
L_\sigma \|W_k\|_2
\|A_\theta H^{(k)}-\widetilde A_\theta \widetilde H^{(k)}\|_F.
\]
Now write
\[
A_\theta H^{(k)}-\widetilde A_\theta \widetilde H^{(k)}
=
A_\theta (H^{(k)}-\widetilde H^{(k)})
+
(A_\theta -\widetilde A_\theta )\widetilde H^{(k)}.
\]
Therefore,
\[
\|A_\theta H^{(k)}-\widetilde A_\theta \widetilde H^{(k)}\|_F
\le
\|A_\theta \|_2\Delta_k
+
\|A_\theta -\widetilde A_\theta \|_2\|\widetilde H^{(k)}\|_F.
\]
Using
\[
\|A_\theta \|_2\le M_A,
\qquad
\|W_k\|_2\le M_k,
\qquad
\|\widetilde H^{(k)}\|_F\le B_k,
\]
we obtain
\[
\Delta_{k+1}
\le
L_\sigma M_k
\left(
M_A\Delta_k
+
B_k\|A_\theta -\widetilde A_\theta \|_2
\right).
\]
\end{proof}

Proposition~\ref{prop:layerwise_bound} provides a recursive estimate
for the propagation of perturbations through successive
message passing layers. Although sufficient for establishing prediction
stability, the recursive form obscures the cumulative influence of
perturbations as the network depth increases. By repeatedly applying
the layerwise estimate, the recursion can be unrolled to obtain an
explicit bound on the final embedding perturbation. This expression
separates the effects of the initial feature perturbation from the
accumulated adjacency perturbations and makes their dependence on the
network parameters transparent.

\begin{theorem}[Prediction stability under attributed-graph perturbations]
\label{thm:prediction_stability}
Let $M_\theta$ and $\widetilde M_\theta$
be two attributed graph projections evaluated by the same \(K\)-layer
message passing predictor. Suppose
\[
\|A_\theta -\widetilde A_\theta \|_2\le\varepsilon_A
\qquad\text{and}\qquad
\|X_\theta -\widetilde X_\theta \|_F\le\varepsilon_X.
\]
Under the stated Lipschitz and boundedness assumptions,
\[
d_{\mathcal Y}
\bigl(
g_\phi(A_\theta ,X_\theta ),
g_\phi(\widetilde A_\theta ,\widetilde X_\theta )
\bigr)
\le
L_qL_P\Delta_K,
\]
where
$
\Delta_0=\varepsilon_X.
$
\end{theorem}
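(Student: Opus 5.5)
The plan is to compose the three Lipschitz estimates along the factorization $g_\phi(A_\theta,X_\theta)=q(P(H^{(K)}))$, and to control the embedding discrepancy $\Delta_K$ with Proposition~\ref{prop:layerwise_bound}.

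First, I will apply the Lipschitz property of the output map $q$ to the two pooled embeddings $P(H^{(K)})$ and $P(\widetilde H^{(K)})$. This gives
\[
d_{\mathcal Y}\bigl(g_\phi(A_\theta,X_\theta),g_\phi(\widetilde A_\theta,\widetilde X_\theta)\bigr)
\le L_q\,\|P(H^{(K)})-P(\widetilde H^{(K)})\|_2 .
\]
Next, the Lipschitz property of the pooling operator bounds the right-hand side by $L_qL_P\|H^{(K)}-\widetilde H^{(K)}\|_F=L_qL_P\Delta_K$. This is the claimed inequality, with $\Delta_K$ the terminal discrepancy of the two message-passing sequences.

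It remains to justify the initialization $\Delta_0=\varepsilon_X$ and to explain how $\Delta_K$ is controlled by the data $\varepsilon_A,\varepsilon_X$. Since $H^{(0)}=X_\theta$ and $\widetilde H^{(0)}=\widetilde X_\theta$, we have $\Delta_0=\|X_\theta-\widetilde X_\theta\|_F\le\varepsilon_X$.

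I would then invoke Proposition~\ref{prop:layerwise_bound} with $\|A_\theta-\widetilde A_\theta\|_2\le\varepsilon_A$. This gives the recursion
\[
\Delta_{k+1}\le L_\sigma M_k\bigl(M_A\Delta_k+B_k\varepsilon_A\bigr).
\]
The right-hand side is monotone in $\Delta_k$, so induction on $k$ shows that $\Delta_K$ is bounded by the sequence generated by this recursion starting from $\varepsilon_X$. Unrolling the recursion yields the explicit form
\[
\Delta_K\le \Bigl(\prod_{k=0}^{K-1}L_\sigma M_kM_A\Bigr)\varepsilon_X+\varepsilon_A\sum_{j=0}^{K-1}L_\sigma M_jB_j\prod_{k=j+1}^{K-1}L_\sigma M_kM_A .
\]
This is the cumulative expression the theorem's surrounding discussion anticipates.

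The only genuinely delicate point is bookkeeping rather than analysis. The statement writes ``$\Delta_0=\varepsilon_X$'', whereas what is actually available is $\Delta_0\le\varepsilon_X$. I would handle this by reading $\Delta_K$ in the theorem as the value of the majorizing recursion started at $\varepsilon_X$. Monotonicity of the recursion then makes the replacement legitimate, since a smaller starting value can only produce smaller bounds at every layer.

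I would also check that the norms match at each composition step. The Frobenius norm feeds into $P$, the $2$-norm feeds into $q$, and spectral-times-Frobenius submultiplicativity is already used inside Proposition~\ref{prop:layerwise_bound}, so nothing new is needed there.
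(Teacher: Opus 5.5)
Your proposal is correct and follows the paper's proof. Both compose the Lipschitz bounds for $q$ and $P$ to get $L_qL_P\|H^{(K)}-\widetilde H^{(K)}\|_F$, and both control $\Delta_K$ through the recursion of Proposition~\ref{prop:layerwise_bound}. Your monotonicity induction justifying the switch from $\Delta_0\le\varepsilon_X$ to $\Delta_0=\varepsilon_X$ tidies a point the paper glosses over, and your unrolled bound is exactly the paper's subsequent corollary.
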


\begin{proof}
The recursive bound on \(\Delta_K\) follows from 
proposition \ref{prop:layerwise_bound} and the assumptions
\[
\|A_\theta -\widetilde A_\theta \|_2\le \varepsilon_A,
\qquad
\|X_\theta -\widetilde X_\theta \|_F\le \varepsilon_X.
\]
We use the Lipschitz continuity of \(q\) and \(P\) to obtain
\[
d_{\mathcal Y}
\bigl(
g_\phi(A_\theta ,X_\theta ),g_\phi(\widetilde A_\theta ,\widetilde X_\theta )
\bigr)
\le
L_q
\|P(H^{(K)})-P(\widetilde H^{(K)})\|_2
\le
L_qL_P
\|H^{(K)}-\widetilde H^{(K)}\|_F.
\]
Thus,
\[
d_{\mathcal Y}
\bigl(
g_\phi(A_\theta ,X_\theta ),g_\phi(\widetilde A_\theta ,\widetilde X_\theta )
\bigr)
\le
L_qL_P\Delta_K.
\]
\end{proof}

Theorem~\ref{thm:prediction_stability} expresses prediction stability in
terms of the perturbation accumulated across the message passing
layers. For interpretation and subsequent analysis, it is useful to
eliminate this recursion and obtain an explicit dependence on the
perturbations of the adjacency matrix and node attributes. The
following corollary provides this closed-form bound.

\begin{corollary}[Explicit prediction perturbation bound]
Under the assumptions of Theorem~\ref{thm:prediction_stability}, let $a_k=L_\sigma M_kM_A$ and $b_k=L_\sigma M_kB_k.$
Then
\[
d_{\mathcal Y}
\bigl(
g_\phi(A_\theta ,X_\theta ),
g_\phi(\widetilde A_\theta ,\widetilde X_\theta )
\bigr)
\le
L_qL_P
\left[
\left(\prod_{j=0}^{K-1}a_j\right)\varepsilon_X
+
\varepsilon_A
\sum_{i=0}^{K-1}
b_i
\prod_{j=i+1}^{K-1}a_j
\right].
\]
\end{corollary}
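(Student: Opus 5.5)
The plan is to unroll the recursion of Proposition~\ref{prop:layerwise_bound} into a closed form and then substitute it into Theorem~\ref{thm:prediction_stability}. With $a_k=L_\sigma M_kM_A$ and $b_k=L_\sigma M_kB_k$, and using the hypothesis $\|A_\theta-\widetilde A_\theta\|_2\le\varepsilon_A$, the proposition becomes the linear recursive inequality
\[
\Delta_{k+1}\le a_k\Delta_k+b_k\varepsilon_A,\qquad k=0,\ldots,K-1.
\]
The initial condition is $\Delta_0=\|X_\theta-\widetilde X_\theta\|_F\le\varepsilon_X$, since $H^{(0)}=X_\theta$ and $\widetilde H^{(0)}=\widetilde X_\theta$. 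All coefficients $a_k,b_k$ are nonnegative, being products of Lipschitz constants and norm bounds. This matters because it lets us substitute upper bounds for $\Delta_k$ on the right-hand side without reversing the inequality.

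The key step is an induction on $k$ establishing
\[
\Delta_k\le\left(\prod_{j=0}^{k-1}a_j\right)\varepsilon_X+\varepsilon_A\sum_{i=0}^{k-1}b_i\prod_{j=i+1}^{k-1}a_j,
\]
with empty products equal to $1$ and empty sums equal to $0$.

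\emph{Base case.} For $k=0$ the claim reads $\Delta_0\le\varepsilon_X$, which holds by hypothesis.

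\emph{Inductive step.} Assume the bound for $\Delta_k$ and apply $\Delta_{k+1}\le a_k\Delta_k+b_k\varepsilon_A$, using $a_k\ge0$ to insert the bound.
\begin{itemize}
\item Multiplying by $a_k$ extends each product by one factor. The first term becomes $\prod_{j=0}^{k}a_j\,\varepsilon_X$, and each summand becomes $b_i\prod_{j=i+1}^{k}a_j$.
\item The extra term $b_k\varepsilon_A$ is exactly the new summand for $i=k$, whose product $\prod_{j=k+1}^{k}a_j$ is empty and equals $1$.
\end{itemize}
Together these give the formula at index $k+1$.

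Finally, take $k=K$ and substitute into the bound $d_{\mathcal Y}\le L_qL_P\Delta_K$ from Theorem~\ref{thm:prediction_stability}; since $L_qL_P\ge0$, the inequality is preserved and the stated corollary follows.

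There is no real obstacle here. The only points needing care are the index bookkeeping in the inductive step, namely the empty-product convention for $i=k$, and stating explicitly that the nonnegativity of the coefficients justifies each substitution of an upper bound.
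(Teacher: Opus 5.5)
Your proposal is correct and follows the same route as the paper. You rewrite Proposition~\ref{prop:layerwise_bound} as $\Delta_{k+1}\le a_k\Delta_k+b_k\varepsilon_A$, unroll it, and substitute into $d_{\mathcal Y}\le L_qL_P\Delta_K$ from Theorem~\ref{thm:prediction_stability}. Your explicit induction, using the empty-product convention and $a_k\ge 0$, makes rigorous the paper's ``continuing recursively'' step, and your $\Delta_0\le\varepsilon_X$ is the correct reading of the paper's $\Delta_0=\varepsilon_X$.
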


\begin{proof}
From Theorem~\ref{thm:prediction_stability}, $\Delta_{k+1}
\le
a_k\Delta_k+b_k\varepsilon_A$, 
where $a_k=L_\sigma M_kM_A$, $b_k=L_\sigma M_kB_k,$ 
and $\Delta_0=\varepsilon_X$.
Applying the recursion successively gives
\[
\Delta_1
\le
a_0\varepsilon_X+b_0\varepsilon_A,
\]
and
\[
\Delta_2
\le
a_1\Delta_1+b_1\varepsilon_A
\le
a_1a_0\varepsilon_X
+
\left(a_1b_0+b_1\right)\varepsilon_A.
\]

Continuing recursively yields
\[
\Delta_K
\le
\left(\prod_{j=0}^{K-1}a_j\right)\varepsilon_X
+
\varepsilon_A
\sum_{i=0}^{K-1}
b_i
\prod_{j=i+1}^{K-1}a_j,
\]
where an empty product is interpreted as \(1\).

Finally, Theorem~\ref{thm:prediction_stability} gives
\[
d_{\mathcal Y}
\bigl(
g_\phi(A_\theta ,X_\theta ),
g_\phi(\widetilde A_\theta ,\widetilde X_\theta )
\bigr)
\le
L_qL_P\Delta_K.
\]
Substituting the bound above proves the result.
\end{proof}

The preceding results characterize how perturbations of the attributed
Mapper graph propagate through a graph based predictor. Perturbations
of the node feature matrix affect the local information associated with
refined pullback regions, whereas perturbations of the adjacency matrix
modify the relational structure used during message passing. The
explicit perturbation bound quantifies how these sources of variation
accumulate across successive network layers and influence the final
prediction.

This analysis concerns the stability of graph based predictors acting
on attributed Mapper graphs rather than the stability of the Mapper
construction itself. The perturbation experiments in
Section~\ref{sec:exp} provide an empirical counterpart by introducing
controlled modifications of the Mapper graph connectivity and
examining how these structural perturbations affect predictive
performance across different datasets.










\begin{figure}[t]
\centering
\includegraphics[width=.95\textwidth]{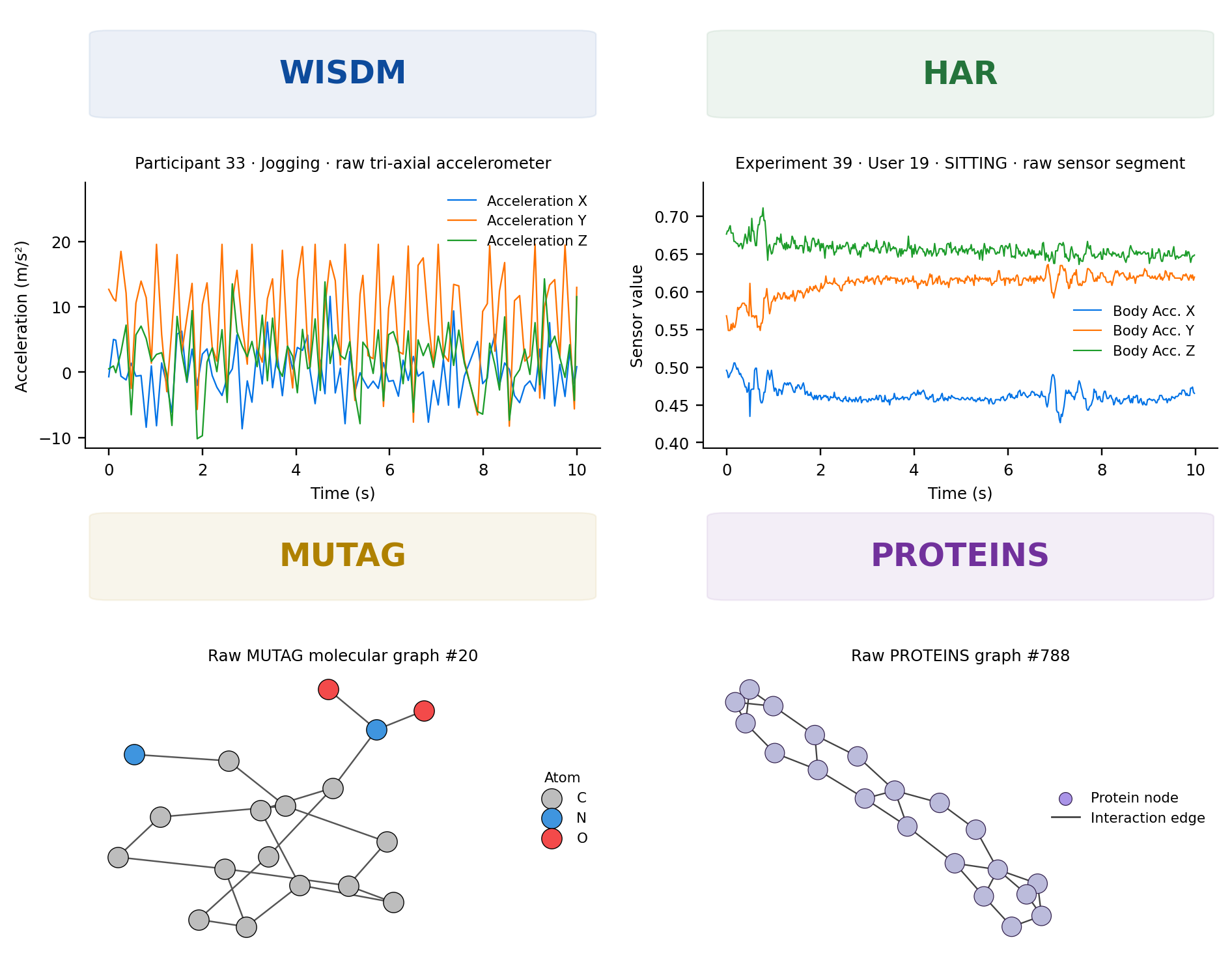}
\caption{
Representative samples from the four benchmark datasets used in the
experimental evaluation.
}
\label{fig:datasets}
\end{figure}

\section{Experiments}
\label{sec:exp}

The previous sections developed three complementary aspects of learning
on Mapper induced representations: representation richness, structural
geometry, and representation complexity. The experiments in this
section investigate these ideas through empirical evaluation on four
benchmark datasets. Throughout the experiments, we study two
complementary dimensions of Mapper representations. The first concerns
the amount of structural information retained by the representation,
ranging from global summaries to the complete Mapper graph. The second
concerns the Mapper construction itself, where the number of cover
intervals and overlap percentage are varied to generate families of
related representations.

The experimental study is organized around the following research
questions:

\begin{enumerate}
    \item \textbf{Representation richness.}
How does predictive performance change as progressively richer
structural information from the Mapper representation is retained?

    \item \textbf{Representation geometry.}
Do the proposed structural distances meaningfully organize Mapper
representations generated by varying the cover resolution and overlap
percentage?

    \item \textbf{Complexity and robustness.}
How do representation complexity and controlled perturbations of the
Mapper graph influence predictive performance?

\end{enumerate}

\subsection{Experimental Design}
\label{subsec:experimental_design}

To evaluate the proposed framework across multiple data modalities, we
consider four benchmark datasets representing both multivariate
time series classification and graph classification problems.
Specifically, WISDM~\cite{kwapisz2011activity} and
HAR~\cite{anguita2013public} are human activity recognition datasets
constructed from wearable sensor measurements, while
MUTAG~\cite{debnath1991structure,morris2020tudataset} and
PROTEINS~\cite{borgwardt2005protein,morris2020tudataset} are widely used
graph classification benchmarks. Representative examples from the four
datasets are shown in Figure~\ref{fig:datasets}. These datasets exhibit substantially
different structural characteristics and therefore provide an
opportunity to evaluate whether the proposed framework captures useful
representation properties across heterogeneous domains rather than a
single application.

\begin{table}[h!]
\centering
\caption{Fixed Mapper construction and learning settings used in the
experiments. 
}
\label{tab:fixed_experimental_settings}
\small
\resizebox{\textwidth}{!}{
\begin{tabular}{lcccc}
\toprule
\textbf{Dataset}
&
\textbf{Lens}
&
\textbf{Distance}
&
\textbf{Clustering method}
\\
\midrule

HAR
&
\textit{ RMS of acc. and gyr. coordinates (2D) }
&
\textit{Euclidean}
&
\textit{HDBSCAN}
\\

WISDM
&
\textit{Acceleration RMS}
&
\textit{Euclidean}
&
\textit{HDBSCAN}
\\

MUTAG
&
\textit{Spectral}
&
\textit{induced-graph connectivity}
&
\textit{Connected component}
\\

PROTEINS
&
\textit{diffusion-map lenses (2D)}
&
\textit{induced-graph connectivity}
&
\textit{Connected component}
\\
\bottomrule
\end{tabular}
}
\end{table}
For every dataset, a Mapper representation is constructed using a fixed
filter function, cover construction, clustering algorithm, edge
weighting, and node feature definition. Unless otherwise stated, all
prediction experiments employ graph neural network
architecture, ensuring that differences in predictive performance are
attributable to the Mapper representations rather than changes in the
learning model. Table~\ref{tab:fixed_experimental_settings} summarizes
the Mapper construction settings that remain fixed for each dataset,
while dataset specific node attribute categories are summarized in
Appendix~\ref{app:node_features}.
For the parameter sensitivity
studies, the number of cover intervals was varied over
$\{5,7,9,11,13,15\}$ and the overlap percentage over
$\{10\%,20\%,30\%,40\%,50\%\}$ for all datasets, while all remaining
Mapper construction settings were held fixed.

\paragraph{Learning protocol.}
For the attributed graph representations, prediction is performed using
a two-layer graph convolutional network (GCN) followed by global mean
pooling and a linear classifier. The Global representation, which does
not retain a graph structure, is instead classified using a support
vector machine (SVM). For WISDM and HAR, cross-validation is performed
at the subject level, so that samples from the same subject do not
appear in both training and test folds. For MUTAG and PROTEINS, we use
stratified ten-fold cross-validation. Model and optimization
hyperparameters are fixed within each dataset group and are reported in
Appendix~\ref{app:learning_details}.

A central objective of this paper is to investigate how different
components of a Mapper representation contribute to predictive
learning. To this end, we consider the richness of representations
illustrated in Figure~\ref{fig:representation_hierarchy}. Beginning with
coarse global summaries and progressively incorporating local node
attributes and relational connectivity, this richness provides a
systematic mechanism for isolating the predictive contribution of
individual structural components. 
As a reference, we additionally report predictive performance on the
original input data without constructing a Mapper representation. Complete numerical results are provided in
Appendix~\ref{app:raw_results}. Since all four datasets are established
benchmark datasets, these reference results provide context for the
underlying prediction task. They are reported separately because the
proposed framework investigates learning within Mapper induced
representations rather than comparisons with conventional feature based
learning pipelines. Accordingly, all subsequent experiments focus on
comparisons among the proposed Mapper representation variants.

Classification performance is evaluated using ten-fold
cross-validation. We report Macro-F1 because of the class imbalance present in several benchmark
datasets. Unless explicitly stated otherwise, reported values represent
averages across all cross-validation folds.
\begin{figure}[h!]
\centering
\includegraphics[width=.95\textwidth]{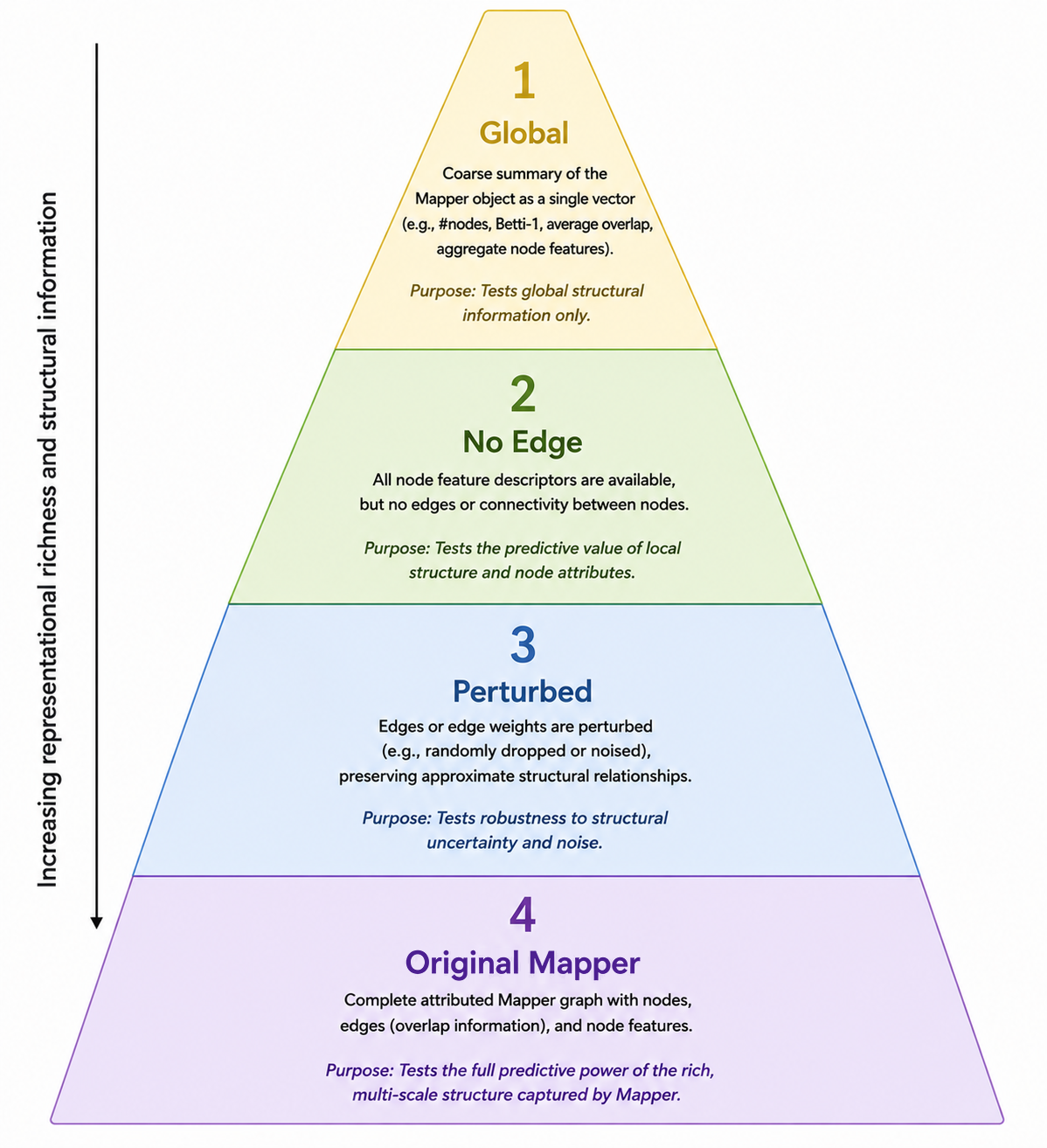}
\caption{Richness of Mapper induced representations considered in the experimental study.}
\label{fig:representation_hierarchy}
\end{figure}
The remainder of this section follows the organization outlined above.








\subsection{Predictive contribution of representation components}
\label{subsec:representation_hierarchy_results}


The first experiment evaluates the predictive contribution of the
different levels of representation richness introduced in
Figure~\ref{fig:representation_hierarchy}. By comparing the four
representation variants, we investigate how progressively retaining
different structural components of the Mapper representation influences
predictive performance.
For each dataset and Mapper parameter setting, all representations are
constructed from the same underlying Mapper object. The input data,
training and test partitions, node attributes, predictive architecture,
and optimization procedure are held fixed wherever the representations
are directly comparable. Consequently, the comparison isolates the
effect of the structural information retained by each representation.
Performance is evaluated using test Macro-F1, with accuracy and balanced
accuracy reported as supplementary measures.

\begin{figure}[t]
    \centering
    \includegraphics[width=0.92\textwidth]{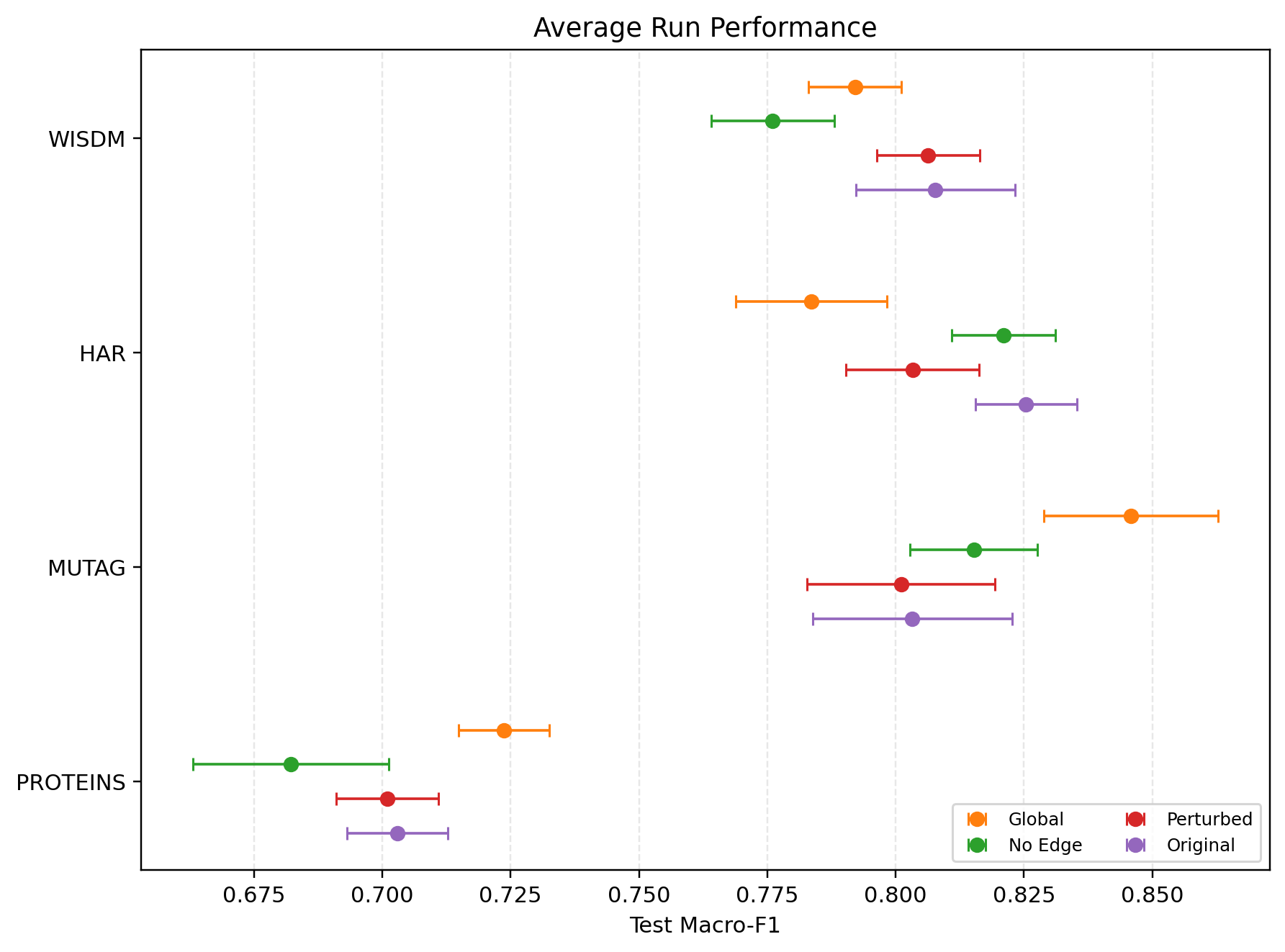}
    \caption{Average predictive performance of the four representation
variants across the evaluated Mapper parameter configurations.
Points indicate mean test Macro-F1 across the 30 configurations, and
error bars indicate one standard deviation across configurations.
Thus, the error bars characterize sensitivity to Mapper construction
parameters rather than variability across repeated training runs.}
    \label{fig:representation_summary}
\end{figure}





Figure~\ref{fig:representation_summary} summarizes the predictive
performance across the four datasets. Parameter specific performance for
individual cover resolutions and overlap percentages is provided in
Appendix~\ref{app:parameter_heatmaps}. In addition to the average performance, we report the maximum test
Macro-F1 attained over the common parameter grid in Table~\ref{tab:best_parameter_settings}. While the average
performance characterizes the overall behavior of each representation,
the maximum value illustrates its best achievable performance within the
range of Mapper constructions considered in this study.

\begin{table}[h!]
\centering
\caption{Maximum observed test Macro-F1 over the evaluated Mapper parameter grid}
\label{tab:best_parameter_settings}
\small
\begin{tabular}{llccc}
\textbf{Dataset}
&
\textbf{Representation}
&
\textbf{Intervals}
&
\textbf{Overlap}
&
\textbf{Test Macro-F1}
\\
\midrule
WISDM
    & Global          & 7  & 0.6 & \(0.807 \pm 0.055\) \\
    & No-edge         & 10 & 0.2 & \(0.796 \pm 0.060\) \\
    & Edge-perturbed  & 16 & 0.6 & \(0.826 \pm 0.066\) \\
    & Original Mapper & 7  & 0.2 & \(0.832 \pm 0.057\) \\
\midrule
HAR
    & Global          & 13 & 0.5 & \(0.804 \pm 0.000\) \\
    & No-edge         & 7  & 0.6 & \(0.852 \pm 0.037\) \\
    & Edge-perturbed  & 19 & 0.2 & \(0.835 \pm 0.049\) \\
    & Original Mapper & 19 & 0.4 & \(0.840 \pm 0.046\) \\
\midrule
MUTAG
    & Global          & 19 & 0.6 & \(0.871 \pm 0.087\) \\
    & No-edge         & 7  & 0.6 & \(0.794 \pm 0.106\) \\
    & Edge-perturbed  & 7  & 0.3 & \(0.856 \pm 0.071\) \\
    & Original Mapper & 13 & 0.5 & \(0.839 \pm 0.067\) \\
\midrule
PROTEINS
    & Global          & 19 & 0.4 & \(0.740 \pm 0.031\) \\
    & No-edge         & 7  & 0.6 & \(0.717 \pm 0.047\) \\
    & Edge-perturbed  & 10 & 0.5 & \(0.717 \pm 0.046\) \\
    & Original Mapper & 7  & 0.5 & \(0.729 \pm 0.044\) \\
\bottomrule
\end{tabular}
\end{table}

For each dataset, Mapper representations were generated over a common
parameter grid as previously mentioned. 
For every construction, the four representation variants
(Global Summary, No-edge, Perturbed, and Original Mapper) were
evaluated using identical training/test partitions, node attributes,
graph neural network architecture, and optimization procedure. Unless
otherwise stated, the reported performance is the average over all 30
Mapper constructions, thereby isolating the effect of the retained
structural information rather than a particular choice of Mapper
parameters. Performance is evaluated using test Macro-F1, with accuracy
and balanced accuracy reported as supplementary measures.

The Edge-perturbed representation generally performs below the
corresponding Original Mapper representation, although the magnitude of the
difference depends on the dataset and parameter setting. This suggests
that the precise nerve connectivity may contribute useful predictive
information, but its effect cannot be characterized solely by the
presence or absence of edges. The detailed parameter level comparisons
in Appendix~\ref{app:parameter_heatmaps} show that the contribution of
connectivity also depends on cover resolution and overlap.

The representation richness reveals that predictive information is
distributed across multiple components of the Mapper construction.
Local attributes account for a substantial portion of performance,
while relational structure provides an additional, but
dataset dependent, contribution. Thus, progressively richer
representations do not guarantee monotone improvement; rather, the
utility of each component depends on the data and the scale at which
the Mapper representation is constructed.
Furthermore, the No-edge, Perturbed, and Original representations are evaluated
under the same GCN architecture, enabling controlled comparisons of
the effect of graph connectivity. The Global representation is a
single vector summary and is therefore evaluated using an SVM; its
comparison with the graph based representations should consequently
be interpreted as a broader representation level baseline rather than
a strict architectural ablation.

\subsection{Empirical organization of the representation space}
\label{subsec:representation_geometry_results}


Section~\ref{sec:geometry} decomposes dissimilarity between Mapper
representations into local, overlap, and relational components. To
examine the empirical behavior of this geometry, we construct Mapper
representations over the parameter grid and compute pairwise structural
distances using Definition \ref{def:structural_dist} on HAR as a representative dataset.
All component descriptors are invariant under relabeling of refined
pullback regions. Before combination, descriptor coordinates are
standardized to prevent quantities with larger numerical scales from
dominating the composite distance. Unless otherwise stated, equal
weights
$
\alpha=\beta=\gamma=\frac{1}{3}
$
are used. Although equal weights are used throughout this work, the empirical descriptor magnitudes are not identical. For the HAR dataset, the mean pairwise descriptor distances are
\[
\overline d_{\rm loc}=0.21,\qquad
\overline d_{\rm ov}=0.31,\qquad
\overline d_{\rm rel}=0.47,
\]indicating that the three descriptors capture complementary aspects of structural variation rather than redundant information.

For a pair of parameter settings \(\theta_s\) and \(\theta_t\), the setting level
distance is obtained by averaging the distance between matched
representations:
\[
\overline D(R_{\theta_s}, R_{\theta_t})
=
\frac{1}{n}
\sum_{i=1}^{n}
D\bigl(R_{\theta_s}(X_i),R_{\theta_t}(X_i)\bigr).
\]
The resulting distance matrix describes the organization of the
parameterized family of Mapper representations independently of the
downstream predictor.

\begin{figure}[t]
    \centering

    \begin{minipage}{0.49\textwidth}
        \centering
        \includegraphics[width=\linewidth]
        {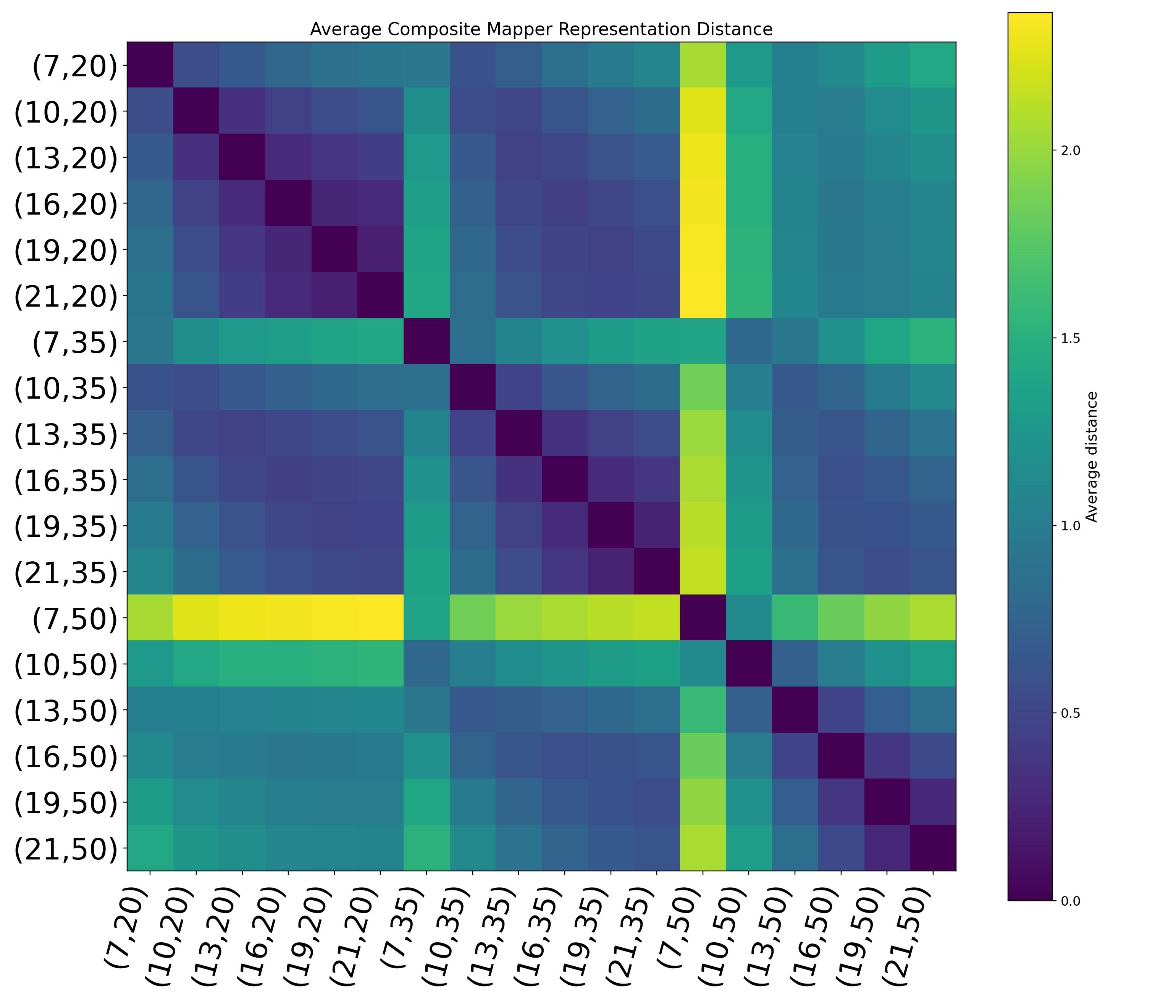}

        \small (a) Pairwise structural distances
    \end{minipage}
    \hfill
    \begin{minipage}{0.49\textwidth}
        \centering
        \includegraphics[width=\linewidth]{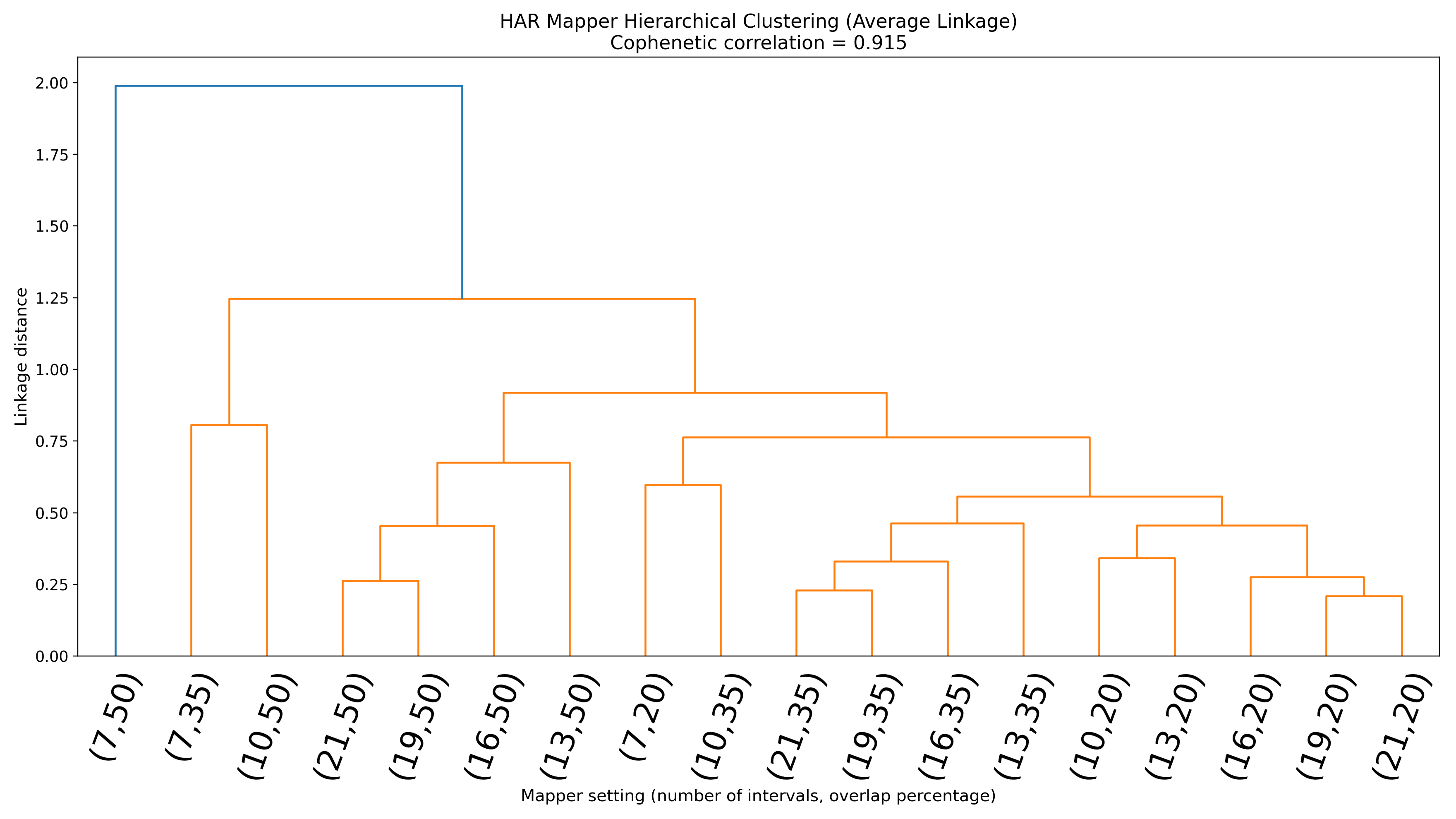}

        \small (b) Hierarchical clustering
    \end{minipage}

    \caption{Structural organization of Mapper representations across
    the parameter grid. The heatmap displays average pairwise composite
    distances, while the dendrogram summarizes the hierarchy induced by
    those distances.}
    \label{fig:representation_geometry}
\end{figure}

\begin{figure}[t]
    \centering
    \includegraphics[width=0.68\textwidth]
    {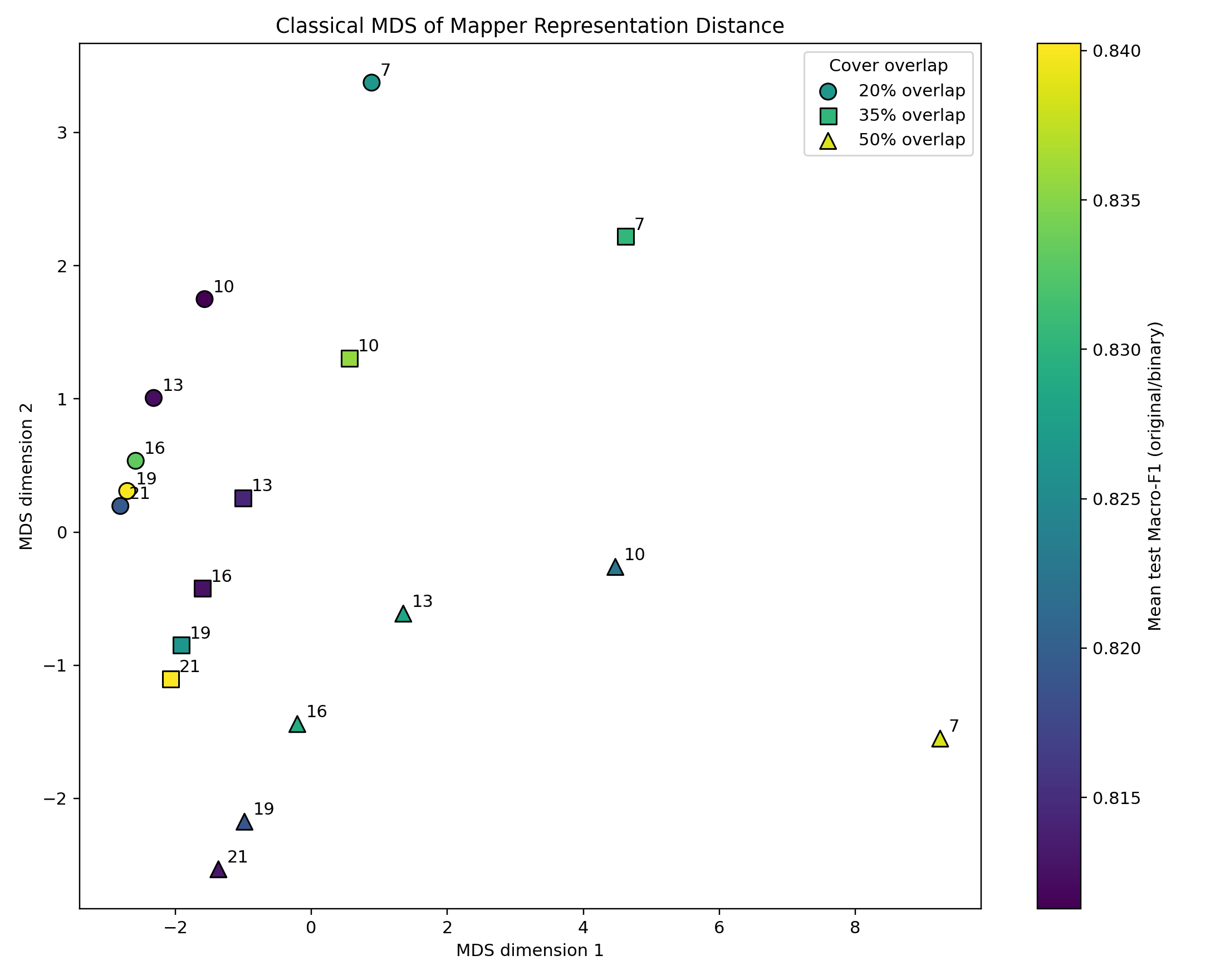}
    \caption{Two-dimensional classical MDS
    visualization of the Mapper representation distance matrix.
    Marker shape identifies cover overlap and labels indicate the number
    of cover intervals.}
    \label{fig:representation_mds}
\end{figure}

Figure~\ref{fig:representation_geometry} summarizes the organization induced by the proposed representation distance. The heatmap displays the pairwise composite distances between Mapper representations generated across the parameter grid, while the dendrogram organizes these representations according to average-linkage hierarchical clustering. The resulting hierarchy exhibits a high cophenetic correlation (0.915), indicating that the clustering provides a faithful summary of the underlying distance matrix.

The analysis reveals several systematic patterns. Representations generated under nearby parameter settings generally have smaller structural distances, whereas coarse constructions using seven cover intervals are consistently separated from higher-resolution representations. Moreover, the organization is not determined by cover resolution alone. Changes in overlap interact with the number of intervals, producing groups that reflect the combined effect of the Mapper construction parameters rather than either parameter individually.
Figure~\ref{fig:representation_mds} provides a complementary geometric visualization using classical multidimensional scaling. Unlike the dendrogram, which emphasizes hierarchical relationships, the multidimensional scaling (MDS) embedding reveals the global geometry of the representation space. Nearby Mapper constructions remain close in the embedding, while structurally distinct parameter settings occupy well-separated regions. Together, these visualizations demonstrate that the proposed distance induces a coherent organization of Mapper representations rather than an arbitrary collection of pairwise dissimilarities.

\subsection{Structural complexity and predictive robustness}
\label{subsec:complexity_robustness_results}


The theoretical analysis in Section~\ref{sec:complexity_stability}
identifies representation complexity and perturbation magnitude as two
distinct factors that may influence learning. We study these factors
separately. First, we examine associations between complexity of the Mapper representation and predictive behavior
across the parameter grid. For the empirical analysis, we use equal weights
\(a=b=c=1\), so that the reported complexity is
\[
TC(R)=|V|+|E|+\beta_1.
\] We then evaluate the sensitivity of a fixed
graph based predictor to controlled perturbations of its attributed
Mapper graph input.

\subsubsection{Complexity and predictive behavior}
\label{subsubsec:complexity_results}


\begin{figure}[h!]
    \centering
    \includegraphics[width=0.96\textwidth]
    {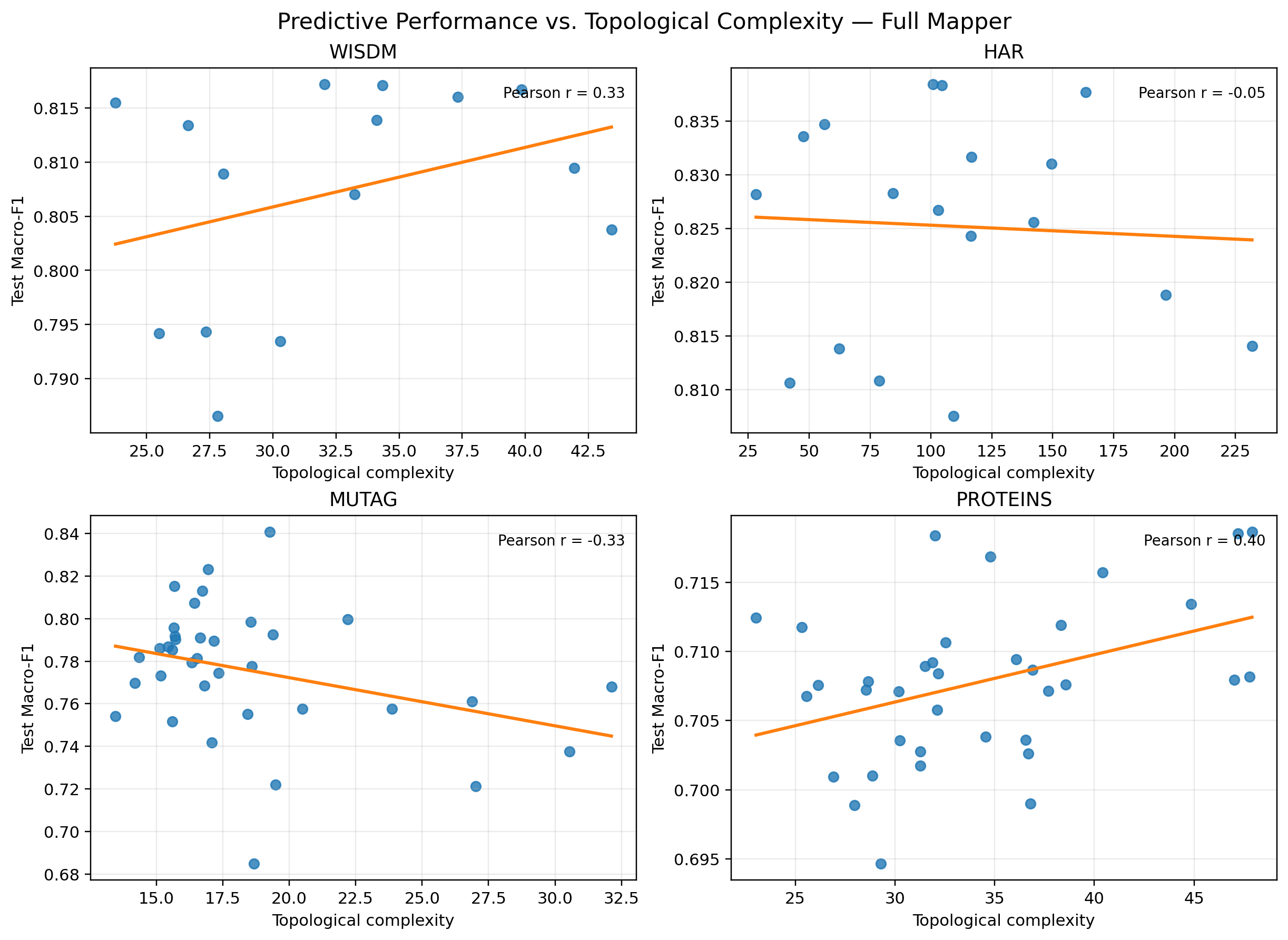}
    \caption{Relationship between topological complexity and test
    Macro-F1 across the four benchmark datasets.}
    \label{fig:complexity_performance_all}
\end{figure}

Figure~\ref{fig:complexity_performance_all} examines the relationship
between topological complexity and predictive performance for the
Original Mapper representation. Across the four datasets, the observed
correlations are weak to moderate and vary in direction:
$r=0.33$ for WISDM, $r=-0.05$ for HAR, $r=-0.33$ for MUTAG, and
$r=0.40$ for PROTEINS. Thus, increased topological complexity is not
associated with a consistent increase or decrease in test Macro-F1.

This result clarifies the role of the proposed complexity functional.
The measure is designed to summarize complementary aspects of Mapper
structure---local decomposition resolution, global topology, and
overlap interaction---rather than to serve as a direct predictor of
classification performance. The absence of a universal monotone
relationship indicates that structurally more complex Mapper
representations are not necessarily more predictive. Instead, the
usefulness of a particular level of complexity appears to depend on the
dataset and the resulting organization of the representation. 
This result motivates future work on more task aware notions of Mapper complexity.
These results also suggest that the present additive complexity
functional should be viewed as a first representation level summary;
alternative weighting schemes or task adaptive complexity measures may
provide a more refined characterization of the relationship between
Mapper structure and downstream learning.





\begin{figure}[t]
    \centering
    \includegraphics[width=0.85\textwidth]
    {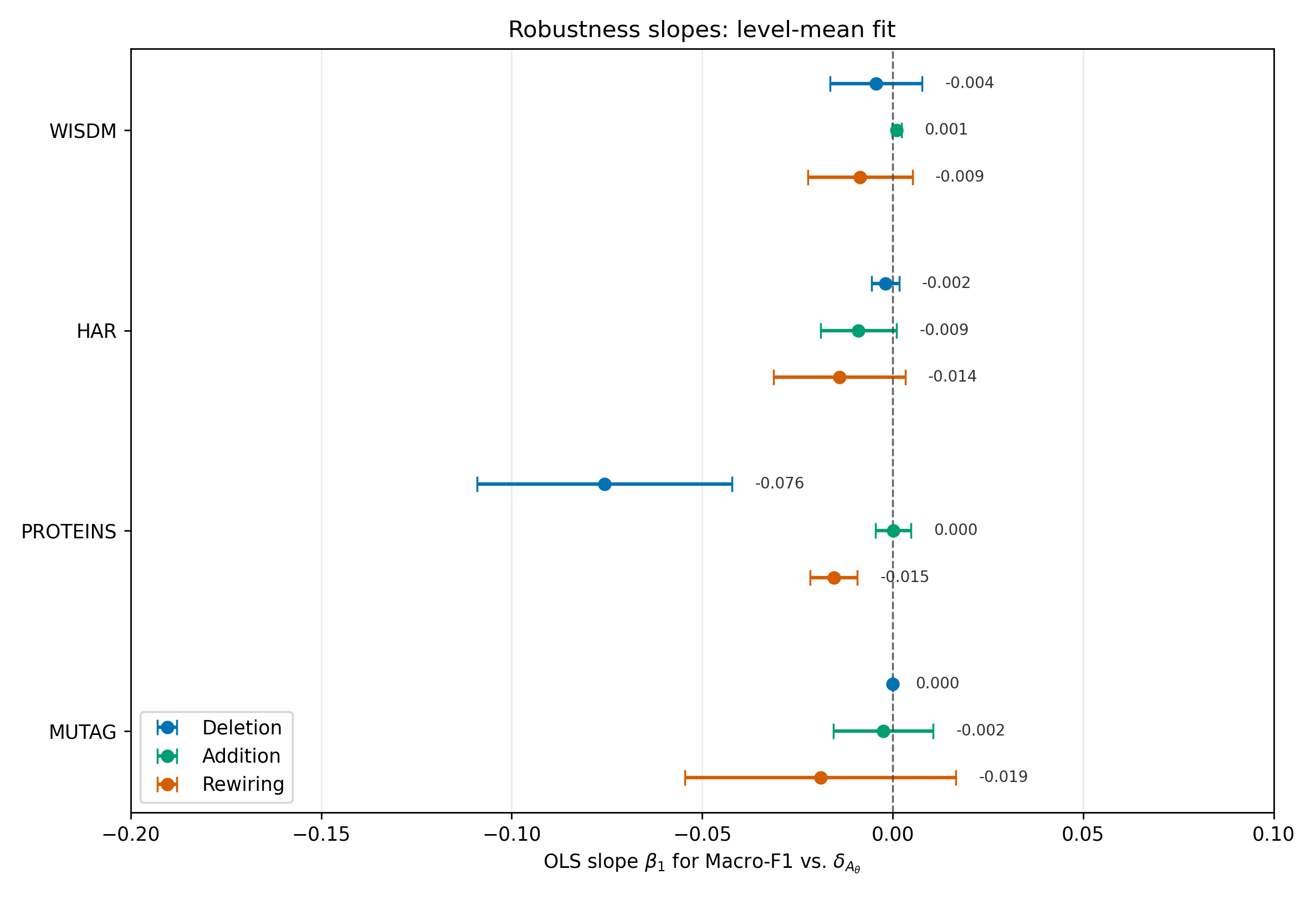}
   \caption{Estimated robustness slopes relating test Macro-F1 to perturbation level (\(\delta_{A_\theta}\)) for three edge perturbation mechanisms.}
    \label{fig:full_mapper_parameter_heatmaps}
\end{figure}

The parameter heatmaps in
Figure~\ref{fig:appendix_parameter_heatmaps} further demonstrate
that predictive performance depends jointly on cover resolution and
overlap, with no single parameter direction yielding uniformly superior
performance across all datasets. Nevertheless, a consistent qualitative
pattern emerges across most parameter settings: the Original Mapper
representation typically achieves the highest performance, the
Perturbed representation generally performs slightly worse, and the
No-edge representation is often the least predictive. Although this
ordering is not universal, it suggests that preserving increasingly
complete structural information is generally beneficial. At the same
time, the dependence on both cover resolution and overlap indicates
that structural richness alone does not determine predictive
performance, supporting the interpretation of topological complexity as
a descriptor of the representation rather than an objective to be
maximized.

\subsubsection{Robustness under attributed-graph perturbations}
\label{subsubsec:robustness_results}


We next evaluate the empirical counterpart of the prediction stability analysis in Section~\ref{sec:stability}. While the theoretical formulation allows perturbations to both the adjacency matrix and node-feature matrix, the present experiment focuses specifically on perturbations of relational structure. We therefore hold the node attributes fixed, so that $\widetilde X_\theta=X_\theta$,
and modify only the adjacency matrix \(A_\theta\). Starting from the original attributed Mapper graph, we generate perturbed representations using
three edge modification mechanisms. For each mechanism, we target
\(k\in\{0,2,5,7,10\}\) edge modifications. For \emph{edge deletion}, \(k\)
existing edges are sampled and removed; for \emph{edge addition}, \(k\)
previously absent edges are sampled and inserted; and for \emph{rewiring},
\(k\) existing edges are replaced by connections between previously
unconnected node pairs. When a graph contains too few eligible edges
or node pairs to realize the requested value of \(k\), the maximum
feasible number of modifications is applied. Consequently, the realized
perturbation count can be slightly smaller than the target value for
some datasets and Mapper constructions.

For each perturbed graph, we quantify the relative magnitude of the structural perturbation by
\[
\delta_{A_\theta}
=
\frac{\|A_\theta-\widetilde A_\theta\|_2}{\|A_\theta\|_2}
\]
and record the resulting test Macro-F1. The predictor parameters \(\phi\) are held fixed throughout, so that the experiment measures sensitivity to changes in Mapper connectivity rather than variability caused by changes in node attributes or model retraining.

Because random graph perturbations need not produce monotone changes in
classification performance, robustness is summarized using regression
slopes rather than requiring monotonic degradation. For each dataset,
Mapper construction, and perturbation type, we fit
\[
\mathrm{Macro\,F1}
=
\beta_0+\beta_1\delta_{A_\theta}+\varepsilon.
\]
A slope near zero indicates empirical insensitivity over the observed
perturbation range, whereas a negative slope indicates performance
degradation as the representation perturbation increases.

\begin{table}[t]
\centering
\caption{Estimated robustness slopes for test Macro-F1 as a function of
relative spectral perturbation. Confidence intervals are reported at the
95\% level.}
\label{tab:robustness_slopes}
\small
\begin{tabular}{llcc}
\toprule
\textbf{Dataset}
&
\textbf{Perturbation}
&
\(\widehat{\beta}_1\)
&
\textbf{95\% CI}
\\
\midrule
WISDM
& Deletion & \(-0.004\) & \([-0.016,\ 0.008]\) \\
& Addition & \( 0.001\) & \([-0.0002,\ 0.002]\) \\
& Rewiring & \(-0.009\) & \([-0.022,\ 0.005]\) \\
\midrule
HAR
& Deletion & \(-0.002\) & \([-0.006,\ 0.002]\) \\
& Addition & \(-0.009\) & \([-0.019,\ 0.001]\) \\
& Rewiring & \(-0.014\) & \([-0.031,\ 0.003]\) \\
\midrule
MUTAG
& Deletion & \( 0.000\) & \([ 0.000,\ 0.000]\) \\
& Addition & \(-0.003\) & \([-0.015,\ 0.01]\) \\
& Rewiring & \(-0.019\) & \([-0.054,\ 0.017]\) \\
\midrule
PROTEINS
& Deletion & \(-0.076\) & \([-0.109,\ -0.042]\) \\
& Addition & \( 0.0002\) & \([-0.004,\ 0.005]\) \\
& Rewiring & \(-0.015\) & \([-0.022,\ -0.009]\) \\
\bottomrule
\end{tabular}
\end{table}
The results indicate that robustness is representation- and
dataset-dependent. Several settings exhibit slopes close to zero,
showing that substantial changes in the attributed graph can occur
without a corresponding systematic decline in Macro-F1. Other
configurations, particularly selected deletion, addition, or rewiring
experiments, show more pronounced negative slopes. These differences
are consistent with the theoretical bound, which allows sensitivity to
depend on both the magnitude of the representation perturbation and the
properties of the trained message passing architecture.

The absence of monotone degradation in individual perturbation curves
does not contradict the stability analysis. Random edge modifications
may remove redundant connections, add benign edges, or occasionally
produce structures that improve a finite sample prediction. The
regression summaries characterize the average directional sensitivity
over repeated trials rather than asserting that every perturbation must
reduce performance.


\section{Discussion}

This work investigates how the representation chosen for a Mapper object
influences downstream learning. Rather than treating the Mapper graph as
a fixed computational output, we introduced a family of representations with different levels of retained structural information,
ranging from global graph summaries to attributed graphs with the
original nerve connectivity. Across four benchmark datasets, the
empirical results consistently demonstrate that representation choice
affects predictive performance. In particular, representations that
preserve the original connectivity generally achieve stronger predictive
accuracy than simplified variants, although the magnitude of the
improvement varies across datasets and parameter settings. This
indicates that relational information contained in the Mapper nerve
often contributes useful predictive signal beyond node-level
descriptors alone.

Beyond comparing predictive accuracy, the proposed representation
geometry provides a quantitative framework for studying relationships
between different Mapper constructions. By combining local, overlap, and
relational descriptors into a composite structural distance, the
resulting geometry organizes Mapper representations according to their
construction parameters and reveals meaningful similarities between
different parameter configurations. This perspective treats Mapper
representations as elements of a structured representation space rather
than isolated graph objects, providing a foundation for analyzing how
changes in Mapper construction alter the resulting representation before
any downstream learning is performed.

The stability analysis further complements this viewpoint. Both the
theoretical results and the empirical perturbation experiments indicate
that sensitivity is dataset- and perturbation-dependent, with several configurations exhibiting near-zero slopes and selected deletion/rewiring settings showing clearer degradation. At the same time, the experiments suggest that
no single notion of graph complexity or parameter choice universally
optimizes predictive performance. Instead, predictive behavior depends
on the interaction between cover resolution, overlap, and the structural
information retained in the representation, reinforcing the importance
of studying Mapper construction as part of the learning pipeline rather
than as a fixed preprocessing step.

Although this study focuses on Mapper, the underlying perspective is
considerably broader. Many topological and geometric data analysis
methods produce structured objects whose representations are simplified
before being used for statistical inference or machine learning. The
representation-centric framework developed here suggests that these
representation choices deserve systematic study in their own right.
Future work includes extending the hierarchy to richer topological
objects that preserve higher-order relationships, developing stronger
theoretical guarantees for additional learning algorithms beyond graph
neural networks, and applying the framework to scientific domains such
as biomedical imaging, genomics, neuroscience, and multimodal data.

\appendix

\section{Additional Experimental Results}
\label{app:additional_results}

This appendix provides supplementary experimental details that support
the analyses presented in Section~\ref{sec:exp}. Specifically, we
report the reference performance obtained without constructing Mapper
representations, summarize the categories of node attributes used for
each dataset, and present parameter-wise performance heat maps across
the complete Mapper parameter grid.

\subsection{Node attribute categories}
\label{app:node_features}

The node attributes associated with each refined pullback region are
constructed from domain-specific descriptors together with structural
properties of the corresponding Mapper graph. Rather than listing every
individual statistic, Table~\ref{tab:node_features} summarizes the
feature categories used for each dataset.

\begin{table}[h!]
\centering
\caption{Categories of node attributes associated with the refined
pullback regions for each benchmark dataset. All datasets additionally
include Mapper-node size and Mapper structural features.}
\label{tab:node_features}
\renewcommand{\arraystretch}{1.15}
\begin{tabular}{lcccc}
\toprule
\textbf{Feature category} &
\textbf{WISDM} &
\textbf{HAR} &
\textbf{MUTAG} &
\textbf{PROTEINS} \\
\midrule
Mapper node size                & \cmark & \cmark & \cmark & \cmark \\
Lens statistics                 & \cmark & \cmark &         &         \\
Acceleration statistics         & \cmark & \cmark &         &         \\
Gyroscope statistics            &         & \cmark &         &         \\
Temporal change statistics      & \cmark & \cmark &         &         \\
Node attribute statistics       &         &         & \cmark & \cmark \\
Label histogram                 &         &         & \cmark & \cmark \\
Graph topological summaries     &         &         & \cmark & \cmark \\
Induced subgraph descriptors    &         &         & \cmark & \cmark \\
Mapper structural features      & \cmark & \cmark & \cmark & \cmark \\
\bottomrule
\end{tabular}
\end{table}

\subsection{Learning and evaluation details}
\label{app:learning_details}

The attributed Mapper representations are evaluated using a two-layer
graph convolutional network (GCN) followed by global mean pooling and a
linear classifier. ReLU activation is applied after each graph
convolution. The learning configurations used for the four datasets are
summarized in Table~\ref{tab:gnn_hyperparameters}. The same
hyperparameters are used across representation variants within each
dataset group so that differences in predictive performance reflect
changes in the representation rather than model specific tuning.

For WISDM and HAR, ten-fold cross-validation is performed at the subject
level: subjects, rather than individual samples, are partitioned among
the folds. This prevents samples from the same subject from appearing
in both the training and test sets. For MUTAG and PROTEINS, stratified
ten-fold cross-validation is used. One model is trained for each fold,
giving ten training fits per experimental condition; the complete
cross-validation procedure is performed once.

The Global representation does not contain graph connectivity and is
therefore evaluated separately using a support vector machine (SVM).
The No-edge, Perturbed, and Original representations are evaluated
using the GCN architecture described above.

\begin{table}[h!]
\centering
\caption{Learning configurations used for the attributed Mapper
representations.}
\label{tab:gnn_hyperparameters}
\small
\begin{tabular}{lcc}
\toprule
\textbf{Setting} & \textbf{WISDM / HAR} & \textbf{MUTAG / PROTEINS} \\
\midrule
Architecture
& 2 GCNConv + linear
& 2 GCNConv + linear \\

Hidden dimension
& 64, 64
& 32, 32 \\

Dropout
& 0.25
& 0.50 \\

Optimizer
& Adam
& Adam \\

Learning rate
& 0.001
& 0.003 \\

Weight decay
& $10^{-4}$
& $5\times10^{-3}$ \\

Batch size
& 16
& 16 \\

Training budget
& 100 epochs
& Maximum 200 epochs \\

Early stopping
& None
& Patience 25 \\

Validation split
& -- 
& 15\% \\

Early-stopping criterion
& --
& Macro-F1 \\

Minimum improvement
& --
& $10^{-4}$ \\

\bottomrule
\end{tabular}
\end{table}

\subsection{Reference performance on the original data}
\label{app:raw_results}

Table~\ref{tab:raw_reference} reports the predictive performance
obtained directly from the original input data without constructing a
Mapper representation. These reference results provide context for the
difficulty of each benchmark dataset and are included solely for
comparison with the Mapper-based representations studied in this paper.

\begin{table}[h!]
\centering
\caption{
Reference predictive performance obtained directly from the original
feature representation without constructing a Mapper object. Values for
cross-validated experiments are mean Macro-F1
}
\label{tab:raw_reference}
\small
\begin{tabular}{p{2.2cm}p{5.2cm}cc}
\toprule
\textbf{Dataset}
&
\textbf{Original feature representation}
&
\textbf{Classifier}
&
\textbf{Macro-F1}
\\
\midrule

WISDM
&
109 statistical, temporal, correlation, and spectral features extracted
from each original 200-sample, three-axis accelerometer window.
&
SVM
&
\(0.885\)
\\[1ex]

HAR
&
Official 561-dimensional UCI HAR feature vectors computed from the
original accelerometer and gyroscope signals.
&
SVM
&
\(0.872\)
\\[1ex]

MUTAG
&
36-dimensional native-graph summary containing pooled statistics of the
original node attributes and basic graph-topology descriptors.
&
Random forest 
&
\(0.857\)
\\[1ex]

PROTEINS
&
24-dimensional native-graph summary containing pooled statistics of the
original node attributes and basic graph-topology descriptors.
&
SVM
&
\(0.756\)
\\

\bottomrule
\end{tabular}
\end{table}

\subsection{Parameter wise performance across the Mapper parameter grid}
\label{app:parameter_heatmaps}

To investigate the sensitivity of the proposed framework to Mapper
construction parameters, we evaluated every representation over the
common parameter grid described in
Section~\ref{subsec:experimental_design}. Each heat map reports the test Macro-F1
score for every combination of cover resolution and overlap
percentage. Together, these figures complement the averaged results in
the main text by illustrating how predictive performance varies across
different Mapper constructions.

\begin{figure}[t]
\centering
\begin{tabular}{c}
\includegraphics[width=.9\linewidth]{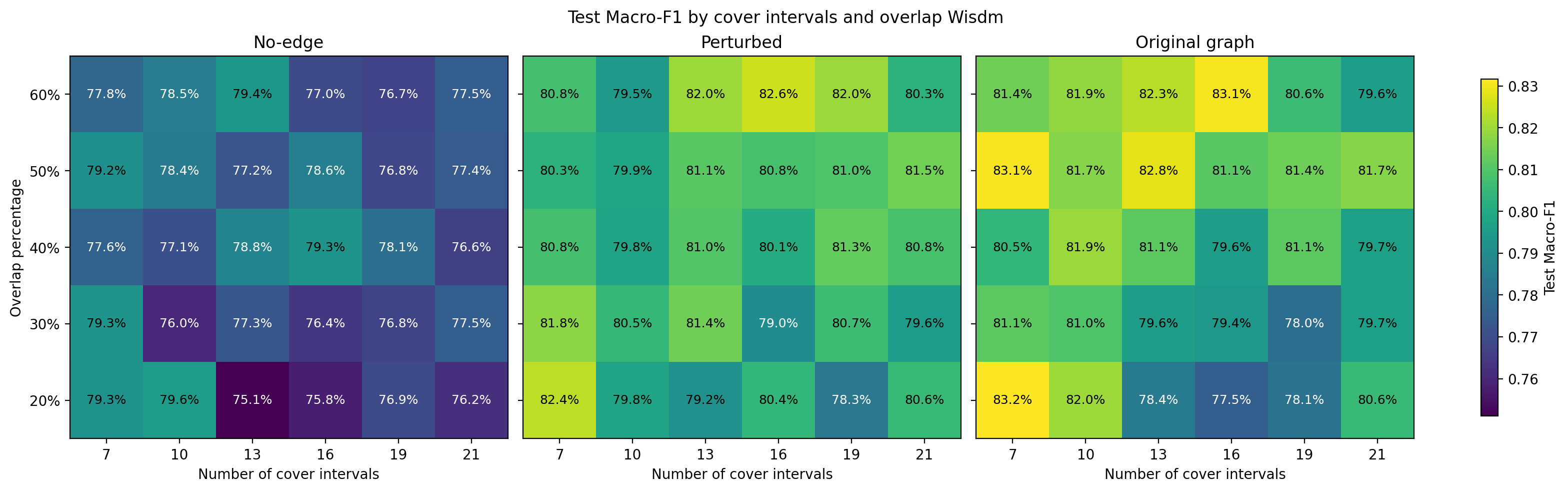} \\
(a) WISDM  \\[1em]
\includegraphics[width=.9\linewidth]{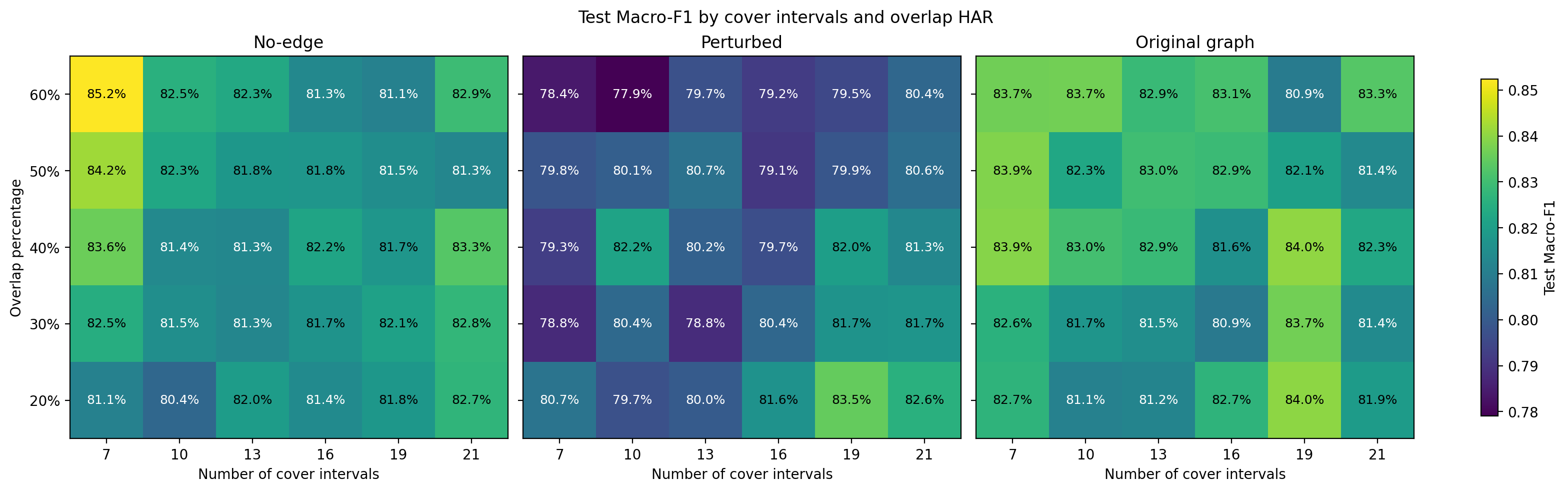} \\

(b) HAR \\

\includegraphics[width=.9\linewidth]{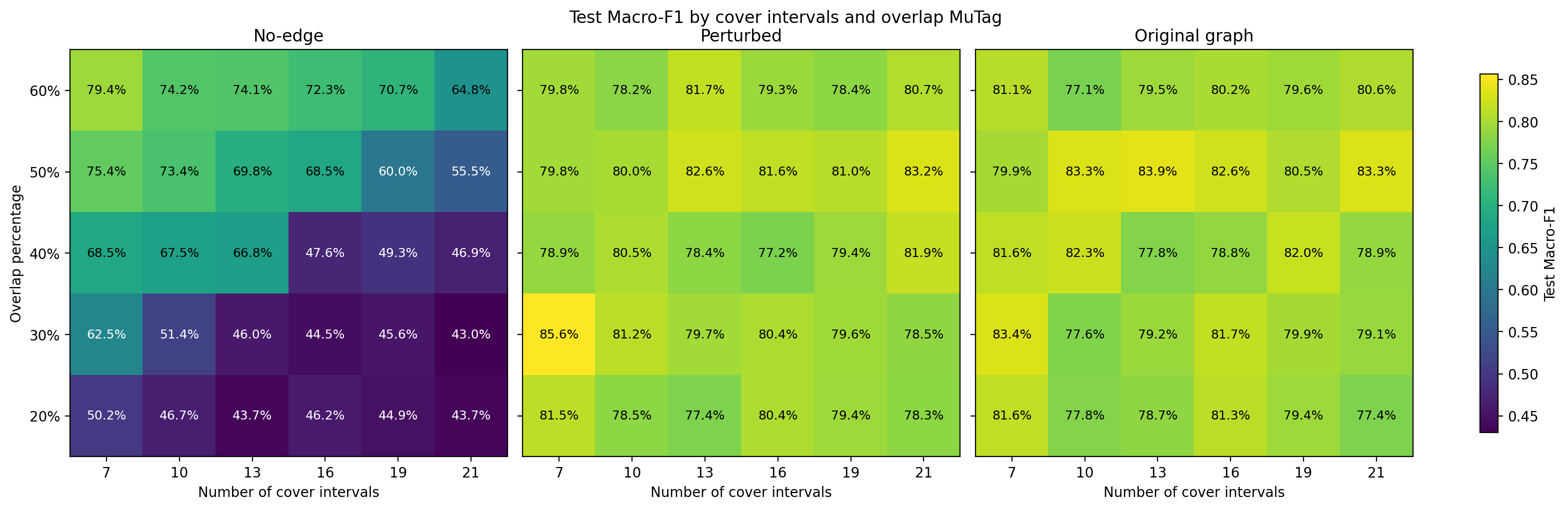} \\
(c) MUTAG \\ 
\includegraphics[width=.9\linewidth]{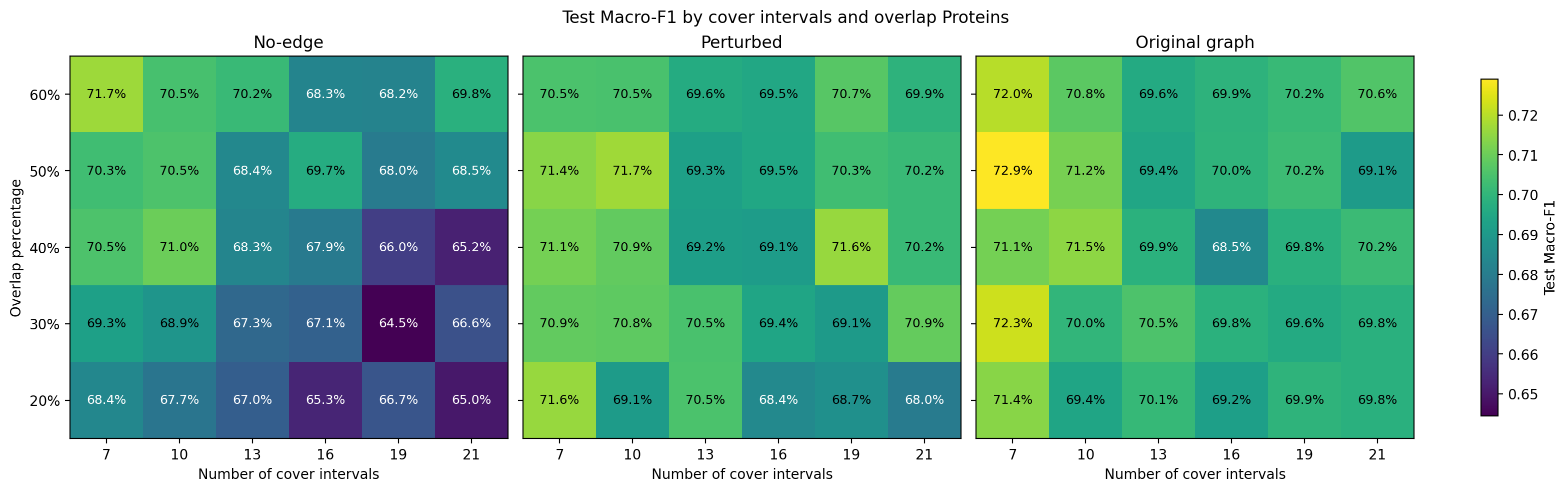} \\

(d) PROTEINS
\end{tabular}

\caption{Parameter wise predictive performance across the complete Mapper parameter grid for all four datasets.}
\label{fig:appendix_parameter_heatmaps}
\end{figure}

\bibliographystyle{plain}
\bibliography{reference}

@Preamble{ " \newcommand{\noop}[1]{} " }

@article{belkin2003laplacian,
  author  = {Belkin, Mikhail and Niyogi, Partha},
  title   = {Laplacian Eigenmaps for Dimensionality Reduction and Data Representation},
  journal = {Neural Computation},
  volume  = {15},
  number  = {6},
  pages   = {1373--1396},
  year    = {2003},
  doi     = {10.1162/089976603321780317}
}

@inproceedings{perozzi2014deepwalk,
  author    = {Perozzi, Bryan and Al-Rfou, Rami and Skiena, Steven},
  title     = {DeepWalk: Online Learning of Social Representations},
  booktitle = {Proceedings of the 20th ACM SIGKDD International Conference
               on Knowledge Discovery and Data Mining},
  pages     = {701--710},
  year      = {2014},
  doi       = {10.1145/2623330.2623732}
}

@inproceedings{grover2016node2vec,
  author    = {Grover, Aditya and Leskovec, Jure},
  title     = {node2vec: Scalable Feature Learning for Networks},
  booktitle = {Proceedings of the 22nd ACM SIGKDD International Conference
               on Knowledge Discovery and Data Mining},
  pages     = {855--864},
  year      = {2016},
  doi       = {10.1145/2939672.2939754}
}

@article{kipf2017gcn,
  title={Semi-Supervised Classification with Graph Convolutional Networks},
  author={Kipf, Thomas and Welling, Max},
  journal={ICLR},
  year={2017}
}

@article{hamilton2017representation,
  author  = {Hamilton, William L. and Ying, Rex and Leskovec, Jure},
  title   = {Representation Learning on Graphs: Methods and Applications},
  journal = {IEEE Data Engineering Bulletin},
  volume  = {40},
  number  = {3},
  pages   = {52--74},
  year    = {2017}
}

@book{edelsbrunner2010computational,
  title={Computational Topology: An Introduction},
  author={Edelsbrunner, Herbert and Harer, John},
  year={2010},
  publisher={AMS}
}

@inproceedings{singh2007topological,
  author    = {Singh, Gurjeet and M{\'e}moli, Facundo and Carlsson, Gunnar},
  title     = {Topological Methods for the Analysis of High Dimensional Data Sets
               and 3D Object Recognition},
  booktitle = {Proceedings of the 4th Eurographics Symposium on Point-Based Graphics},
  pages     = {91--100},
  year      = {2007},
  doi       = {10.2312/SPBG/SPBG07/091-100}
}

@article{lum2013extracting,
  author  = {Lum, P. Y. and Singh, G. and Lehman, A. and Ishkanov, T. and
             Vejdemo-Johansson, Mikael and Alagappan, M. and
             Carlsson, J. and Carlsson, Gunnar},
  title   = {Extracting Insights from the Shape of Complex Data Using Topology},
  journal = {Scientific Reports},
  volume  = {3},
  pages   = {1236},
  year    = {2013},
  doi     = {10.1038/srep01236}
}

@article{nicolau2011topology,
  author  = {Nicolau, Monica and Levine, Arnold J. and Carlsson, Gunnar},
  title   = {Topology-Based Data Analysis Identifies a Subgroup of Breast Cancers
             with a Unique Mutational Profile and Excellent Survival},
  journal = {Proceedings of the National Academy of Sciences},
  volume  = {108},
  number  = {17},
  pages   = {7265--7270},
  year    = {2011},
  doi     = {10.1073/pnas.1102826108}
}

@article{carriere2018statistical,
  author  = {Carri{\`e}re, Mathieu and Michel, Bertrand and Oudot, Steve},
  title   = {Statistical Analysis and Parameter Selection for Mapper},
  journal = {Journal of Machine Learning Research},
  volume  = {19},
  number  = {12},
  pages   = {1--39},
  year    = {2018},
  url     = {https://jmlr.org/papers/v19/17-291.html}
}

@article{Carlsson2009,
  author  = {Gunnar Carlsson},
  title   = {Topology and Data},
  journal = {Bulletin of the American Mathematical Society},
  volume  = {46},
  number  = {2},
  pages   = {255--308},
  year    = {2009},
  doi     = {10.1090/S0273-0979-09-01249-X}
}

@article{Wasserman2018,
  author  = {Wasserman, Larry},
  title   = {Topological Data Analysis},
  journal = {Annual Review of Statistics and Its Application},
  volume  = {5},
  number  = {1},
  pages   = {501--532},
  year    = {2018},
  doi     = {10.1146/annurev-statistics-031017-100045}
}

@article{ChazalMichel2021,
  author  = {Fr{\'e}d{\'e}ric Chazal and Bertrand Michel},
  title   = {An Introduction to Topological Data Analysis: Fundamental and Practical Aspects for Data Scientists},
  journal = {Frontiers in Artificial Intelligence},
  volume  = {4},
  pages   = {667963},
  year    = {2021},
  doi     = {10.3389/frai.2021.667963}
}

@article{Amezquita2023,
  author  = {Am{\'e}zquita EJ and Nasrin F and Storey KM and Yoshizawa M},
  title   = {Genomics data analysis via spectral shape and topology},
  journal = {PLoS One},
  volume  = {18},
  number  = {4},
  pages   = {e0284820},
  year    = {2023},
  doi     = {10.1371/journal.pone.0284820}
}

@inproceedings{reininghaus2015stable,
  title={A stable multi-scale kernel for topological machine learning},
  author={Reininghaus, Jan and Huber, Stefan and Bauer, Ulrich and Kwitt, Roland},
  booktitle={Proceedings of the IEEE Conference on Computer Vision and Pattern Recognition (CVPR)},
  pages={4741--4748},
  year={2015}
}

@article{adams2017persistence,
  author  = {Adams, Henry and Emerson, Tegan and Kirby, Michael and
             Neville, Rachel and Peterson, Chris and Shipman, Patrick and
             Chepushtanova, Sofya and Hanson, Eric and Motta, Francis and
             Ziegelmeier, Lori},
  title   = {Persistence Images: A Stable Vector Representation of Persistent Homology},
  journal = {Journal of Machine Learning Research},
  volume  = {18},
  number  = {8},
  pages   = {1--35},
  year    = {2017},
  url     = {https://www.jmlr.org/papers/v18/16-337.html}
}

@inproceedings{kusano2016persistence,
  author    = {Kusano, Genki and Hiraoka, Yasuaki and Fukumizu, Kenji},
  title     = {Persistence Weighted Gaussian Kernel for Topological Data Analysis},
  booktitle = {Proceedings of the 33rd International Conference on Machine Learning},
  pages     = {2004--2013},
  volume    = {48},
  series    = {Proceedings of Machine Learning Research},
  publisher = {PMLR},
  year      = {2016},
  url       = {https://proceedings.mlr.press/v48/kusano16.html}
}

@book{bakir2007predicting,
  title     = {Predicting Structured Data},
  editor    = {Bak{\i}r, G{\"o}khan H. and Hofmann, Thomas and
               Sch{\"o}lkopf, Bernhard and Smola, Alexander J. and
               Taskar, Ben and Vishwanathan, S. V. N.},
  publisher = {MIT Press},
  address   = {Cambridge, MA},
  year      = {2007}
}

@incollection{nowozin2011structured,
  title={Structured learning and prediction in computer vision},
  author={Nowozin, Sebastian and Lampert, Christoph H.},
  booktitle={Foundations and Trends in Computer Graphics and Vision},
  volume={6},
  number={3--4},
  pages={185--365},
  year={2011},
  publisher={Now Publishers}
}

@inproceedings{curry2023topologically,
  author    = {Curry, Justin and Mio, Washington and Needham, Tom and
               Okutan, Osman Berat and Russold, Florian},
  title     = {Topologically Attributed Graphs for Shape Discrimination},
  booktitle = {Proceedings of 2nd Annual Workshop on Topology, Algebra,
               and Geometry in Machine Learning (TAG-ML)},
  pages     = {87--101},
  volume    = {221},
  series    = {Proceedings of Machine Learning Research},
  publisher = {PMLR},
  year      = {2023},
  url       = {https://proceedings.mlr.press/v221/curry23a.html}
}

@inproceedings{comparingmappergraphs,
  author = {Youjia Zhou and Helen Jenne and Davis R. Brown and Madelyn R. Shapiro and Brett A. Jefferson and Cliff A. Joslyn and Gregory Henselman-Petrusek and Brenda Praggastis and Emilie Purvine and Bei Wang},
  title = {Comparing Mapper Graphs of Artificial Neuron Activations},
  booktitle = {Proceedings of the IEEE Workshop on Topological Data Analysis and Visualization (TopoInVis)},
  year = {2023},
  pages = {41--50},
  address = {Melbourne, Australia},
  publisher = {IEEE Computer Society},
  doi = {10.1109/TopoInVis60193.2023.00011},
  isbn = {979-8-3503-2964-3}
}

@article{deepgraphmapper,
  title={Deep Graph Mapper: Seeing Graphs Through the Neural Lens},
  author={Bodnar, Cristian and Cangea, Cătălina and Liò, Pietro},
  journal={Frontiers in Big Data},
  volume={4},
  pages={680535},
  year={2021}
}

@misc{cyranka2019mapper,
  author       = {Cyranka, Jacek and Georges, Alexander and Meyer, David},
  title        = {Mapper Based Classifier},
  year         = {2019},
  eprint       = {1910.08103},
  archivePrefix = {arXiv},
  primaryClass  = {cs.LG},
  url          = {https://arxiv.org/abs/1910.08103}
}

@misc{morris2020tudataset,
  author       = {Morris, Christopher and Kriege, Nils M. and Bause, Franka and
                  Kersting, Kristian and Mutzel, Petra and Neumann, Marion},
  title        = {{TUDataset}: A Collection of Benchmark Datasets for Learning with Graphs},
  year         = {2020},
  eprint       = {2007.08663},
  archivePrefix = {arXiv},
  primaryClass  = {cs.LG},
  url           = {https://arxiv.org/abs/2007.08663}
}

@book{scholkopf2002kernels,
  title={Learning with Kernels: Support Vector Machines, Regularization, Optimization, and Beyond},
  author={Sch{\"o}lkopf, Bernhard and Smola, Alexander J.},
  year={2002},
  publisher={MIT Press},
  address={Cambridge, MA}
}

@article{shervashidze2011weisfeiler,
  title={Weisfeiler-Lehman Graph Kernels},
  author={Shervashidze, Nino and Schweitzer, Pascal and van Leeuwen, Erik Jan and Mehlhorn, Kurt and Borgwardt, Karsten M.},
  journal={Journal of Machine Learning Research},
  volume={12},
  pages={2539--2561},
  year={2011}
}

@article{carriere2018mapper,
  author    = {Mathieu Carri\`ere and Steve Oudot},
  title     = {Structure and Stability of the {1D} Mapper},
  journal = {Foundations of Computational Mathematics},
  series    = {Proceedings of Machine Learning Research},
  volume    = {18},
  number    ={6},
  pages     = {1333–1396},
  year      = {2018}
}

@inproceedings{kwapisz2011activity,
  author    = {Jennifer R. Kwapisz and Gary M. Weiss and Samuel A. Moore},
  title     = {Activity Recognition using Cell Phone Accelerometers},
  booktitle = {Proceedings of the Fourth International Workshop on Knowledge Discovery from Sensor Data},
  pages     = {10--18},
  year      = {2011}
}

@inproceedings{anguita2013public,
  author    = {Davide Anguita and Alessandro Ghio and Luca Oneto and Xavier Parra and Jorge L. Reyes-Ortiz},
  title     = {A Public Domain Dataset for Human Activity Recognition Using Smartphones},
  booktitle = {ESANN},
  year      = {2013}
}

@article{debnath1991structure,
  author  = {Asim K. Debnath and Richard L. Lopez de Compadre and Gargi Debnath and Alan J. Shusterman and Corwin Hansch},
  title   = {Structure-Activity Relationship of Mutagenic Aromatic and Heteroaromatic Nitro Compounds},
  journal = {Journal of Medicinal Chemistry},
  volume  = {34},
  number  = {2},
  pages   = {786--797},
  year    = {1991}
}

@article{borgwardt2005protein,
  author  = {Borgwardt, Karsten M. and Ong, Cheng Soon and
             Sch{\"o}nauer, Stefan and Vishwanathan, S. V. N. and
             Smola, Alexander J. and Kriegel, Hans-Peter},
  title   = {Protein Function Prediction via Graph Kernels},
  journal = {Bioinformatics},
  volume  = {21},
  number  = {Suppl. 1},
  pages   = {i47--i56},
  year    = {2005},
  doi     = {10.1093/bioinformatics/bti1007}
}

@article{bengio2013representation,
  author  = {Yoshua Bengio and Aaron Courville and Pascal Vincent},
  title   = {Representation Learning: A Review and New Perspectives},
  journal = {IEEE Transactions on Pattern Analysis and Machine Intelligence},
  volume  = {35},
  number  = {8},
  pages   = {1798--1828},
  year    = {2013},
  doi     = {10.1109/TPAMI.2013.50}
}

@article{lecun2015deep,
  author  = {Yann LeCun and Yoshua Bengio and Geoffrey Hinton},
  title   = {Deep Learning},
  journal = {Nature},
  volume  = {521},
  number  = {7553},
  pages   = {436--444},
  year    = {2015},
  doi     = {10.1038/nature14539}
}

@book{goodfellow2016deep,
  author    = {Ian Goodfellow and Yoshua Bengio and Aaron Courville},
  title     = {Deep Learning},
  publisher = {MIT Press},
  address   = {Cambridge, MA},
  year      = {2016},
  isbn      = {9780262035613},
  url       = {https://www.deeplearningbook.org}
}

\appendix

\end{document}